\ifdefined\pdfminorversion\pdfminorversion=7\fi
\documentclass[11pt,onecolumn,copyright,gdm]{google}
\usepackage[authoryear,sort&compress,round]{natbib}
\usepackage{subcaption}
\usepackage{multirow}
\usepackage{algorithm}
\usepackage{algpseudocode}
\usepackage{etoolbox}
\usepackage{placeins}
\usepackage{wrapfig}
\usepackage{arydshln}
\usepackage{amsmath,amsfonts,bm}

\def\eqref#1{equation~\ref{#1}}
\def\1{\bm{1}}

\DeclareMathAlphabet{\mathsfit}{\encodingdefault}{\sfdefault}{m}{sl}
\SetMathAlphabet{\mathsfit}{bold}{\encodingdefault}{\sfdefault}{bx}{n}

\definecolor{routerbg}{RGB}{228,238,250}

\newcommand{\vhop}{\textsc{VHop}}
\newcommand{\router}{\textsc{VHop-Router}}

\newtheorem{theorem}{Theorem}

\newtheorem{proposition}{Proposition}

\newcommand{\dvrule}{\hspace{2.5pt}%
  \smash{\raisebox{-2.2pt}{\rule{0.55pt}{7.2pt}}}%
  \hspace{2.5pt}}

\newsavebox{\gdmtablebox}
\let\gdmoriginalresizebox\resizebox
\RenewDocumentCommand{\resizebox}{s m m +m}{%
  \IfBooleanTF{#1}{\gdmoriginalresizebox*{#2}{#3}{#4}}{%
    \ifstrequal{#3}{!}{%
      \sbox{\gdmtablebox}{#4}%
      \ifdim\wd\gdmtablebox>\dimexpr#2\relax
        \gdmoriginalresizebox{#2}{!}{\usebox{\gdmtablebox}}%
      \else
        \usebox{\gdmtablebox}%
      \fi
    }{\gdmoriginalresizebox{#2}{#3}{#4}}%
  }}
\makeatletter
\AtBeginEnvironment{tabular}{%
  \ifdefined\benchdash
    \renewcommand{\benchdash}{\hdashline[2pt/2pt]}%
  \fi}
\makeatother

\let\gdmoriginalvspace\vspace
\NewDocumentCommand{\gdmfloatvspace}{s m}{%
  \ifdim\dimexpr#2\relax<0pt\relax
  \else
    \IfBooleanTF{#1}{\gdmoriginalvspace*{#2}}{\gdmoriginalvspace{#2}}%
  \fi}
\AtBeginEnvironment{figure}{\let\vspace\gdmfloatvspace}
\AtBeginEnvironment{figure*}{\let\vspace\gdmfloatvspace}
\AtBeginEnvironment{table}{\let\vspace\gdmfloatvspace}
\AtBeginEnvironment{table*}{\let\vspace\gdmfloatvspace}
\let\gdmoriginalendwrapfigure\endwrapfigure
\newif\ifgdmstationarywrap
\NewDocumentCommand{\gdmstartwrap}{m m m m m}{%
  \IfValueTF{#2}{\def\gdmwraplines{[#2]}}{\def\gdmwraplines{}}%
  \ifgdmstationarywrap
    \def\gdmwrapside{#3}%
  \else
    \ifstrequal{#3}{l}{\def\gdmwrapside{L}}{\def\gdmwrapside{R}}%
  \fi
  \edef\gdmwrapstart{\expandafter\noexpand\csname gdmoriginalwrap#1\endcsname
    \gdmwraplines{\gdmwrapside}}%
  \IfValueTF{#4}{\gdmwrapstart[#4]{#5}}{\gdmwrapstart{#5}}}
\RenewDocumentCommand{\wraptable}{o m o m}{\gdmstartwrap{table}{#1}{#2}{#3}{#4}}
\makeatletter
\RenewDocumentEnvironment{wrapfigure}{o m o m +b}{%
  \in@{\vspace}{#5}%
  \ifin@\gdmstationarywraptrue\fi
  \gdmstartwrap{figure}{#1}{#2}{#3}{#4}%
  #5%
  \gdmoriginalendwrapfigure}{}
\makeatother
\makeatletter
\AtBeginDocument{%
  \patchcmd{\ttl@select}{\begingroup}{\begingroup\let\WF@floathand\relax}{}%
    {\PackageError{gdm_main}{Could not adapt wrapped headings}{Check titlesec compatibility.}}}
\makeatother
\newcommand{\vhopneedspace}[1]{\Needspace{#1}}

\AddToHook{file/appendix_cases.tex/before}{%
  \clearpage
  \begingroup
  \small
  \captionsetup{font=footnotesize,skip=6pt}%
  \setlength{\intextsep}{4pt plus 1pt minus 1pt}%
  \let\gdmcaseclearpage\clearpage
  \renewcommand{\clearpage}{%
    \par\begingroup\let\clearpage\gdmcaseclearpage\FloatBarrier\endgroup}}
\AddToHook{file/appendix_cases.tex/after}{\par\endgroup}

\titlespacing*{\paragraph}{0pt}{1.25ex plus 2pt minus 1pt}{0.6em}
\renewcommand{\bibfont}{\small}
\title{Learning to Route in Visual Space via Multi-Step Embedding Retrieval}
\author[1]{Tianyu Chen}
\author[1]{Mingyuan Zhou}
\author[2]{Jiaxing Wu}
\affil[1]{The University of Texas at Austin}
\affil[2]{\thepa{}}

\keywords{agentic search, multimodal retrieval, multi-step embedder, visual RAG}

\uselogo{}
\let\gdmoriginalcopyright\copyrightext
\correspondingauthor{jxwu@google.com\\
This work was done during Tianyu Chen's internship at Google DeepMind.}
\renewcommand{\copyrightext}{%
  \parbox[t]{\textwidth}{\footerfont\raggedright
    \gdmoriginalcopyright}}
\renewcommand{\today}{2026-09-27}
\hypersetup{
  pdftitle={Learning to Route in Visual Space via Multi-Step Embedding Retrieval},
  pdfauthor={Tianyu Chen, Mingyuan Zhou, and Jiaxing Wu},
  pdfsubject={Multi-hop visual retrieval},
  bookmarksnumbered=true,
  linkcolor=blue!60!black,
  citecolor=blue!60!black,
  urlcolor=blue!60!black
}

\begin{abstract}
LLM agents rely on retrieval tools to access external knowledge, yet visual agentic search remains severely bottlenecked by standard single-step retrievers. In current pipelines, the agent must issue text queries for every intermediate step, struggling when visual clues are difficult to describe or when the retriever fails to surface necessary intermediate evidence within its top results. We hypothesize that offloading multi-step navigation across the entire embedding space directly to the retrieval tool resolves this performance bottleneck. To study this systematically, we introduce \vhop{}, a flexible data generation framework and benchmark with five core difficulty levels testing both visual matching and search planning. Using this framework, we develop \router{}, an end-to-end training pipeline---combining supervised fine-tuning, online imitation learning, and reinforcement learning---that transforms a standard embedding model into an autoregressive multi-step retriever. Operating directly in the visual latent space, \router{} retrieves linked image chains in a single tool call without requiring the agent to formulate intermediate text queries. Experiments show \router{} boosts retrieval performance from under 5\% to 76.3\%. In agentic search, it improves task success rates by 52.7\% and reduces the average token length by 61\% from 1886 to 728, whereas upgrading the agent yields only a 3.7\% gain. Compared to a strong baseline where the agent retrieves the top 50 results per step, \router{} maintains superior performance while reducing in-context images by $23\times$ and cutting the cumulative API payload by $35\times$. The models also generalize robustly to unseen difficulty levels and realistic test sets. Ultimately, \vhop{} and \router{} provide an efficient and effective solution for visual agentic search that leaves native LLM capabilities entirely intact.
\end{abstract}

\begin{document}
\maketitle

\section{Introduction}

\begin{wrapfigure}{r}{0.5\textwidth}
\vspace{-38pt}
\centering
\includegraphics[width=\linewidth]{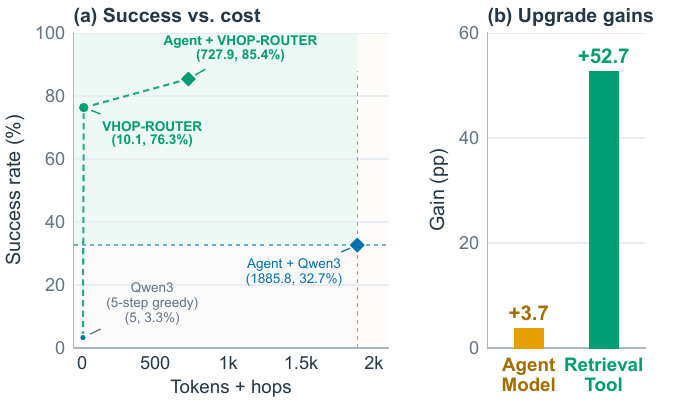}
\captionsetup{font=footnotesize,skip=3pt}
\caption{\textbf{Task performance and upgrade gains.}
Agent denotes Gemini~3.5~Flash.
(a) Success rate versus generated tokens plus retrieval hops.
Standalone Qwen3 uses five greedy retrieval steps.
(b) Gains over Agent + Qwen3 from upgrading the agent model
to Gemini~3.1~Pro or replacing the retrieval tool with
\router{}, \mbox{keeping the other component fixed.}}
\label{fig:vhop_results}
\vspace{-0.7cm}
\end{wrapfigure}

Agentic search equips large language model (LLM) agents with access to external knowledge bases, including up-to-date information and private data unavailable during training~\citep{yao2022react,trivedi2023interleaving}. Recent work fine-tunes LLMs to improve their reasoning and multi-step search~\citep{jin2025search}. These improvements still depend on a fixed retrieval tool to supply relevant evidence. Because the agent sees only the few results returned by this tool, it cannot reason over evidence it never receives.

This bottleneck is especially important in visual search, where each retrieved image may reveal the clue for the next step. When locating a misplaced item in a photo album, visual clues can be hard to express in a textual query (Figure~\ref{fig:retrieval_limitations}a) or be omitted from captions (Figure~\ref{fig:retrieval_limitations}b). A standard embedding retriever performs one retrieval step per call, leaving the agent to build the search chain from the returned images. In Figure~\ref{fig:retrieval_limitations}c, the retriever ranks the required next image 39th, outside the top-five results, leaving the red piggy bank needed for the following hop out of the agent's context. In contrast, embedding retrievers can score candidates across the entire indexed corpus.  

\vhopneedspace{25\baselineskip}
\WFclear % Do not carry the opening figure's wrapping onto the next page.
% Place beside the introduction's visual-search motivation.
% Requires graphicx, wrapfig, and caption (or subcaption).
\begin{wrapfigure}{r}
{0.5\textwidth}
\vspace{-0.5cm}
\centering
\includegraphics[width=\linewidth]{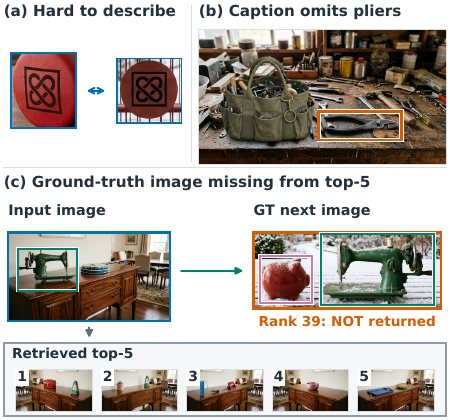}
\caption{\textbf{Visual retrieval bottlenecks.}
(a) Exact symbols can be hard to describe.
(b) The caption mentions ``metal tools'' but does not name the boxed pliers.
(c) Following the green sewing machine requires the ground-truth next image. The retriever ranks it 39th, so the red piggy bank needed for the following hop is not shown.}
\label{fig:retrieval_limitations}
\vspace{-0.5cm}
\end{wrapfigure}

We hypothesize that the main bottleneck in these scenarios is the retrieval tool itself. We therefore train this tool to navigate directly in visual embedding space over multiple steps, without agent fine-tuning. Shifting multi-hop search from the agent to the embedding model (replacing a single-step model with a multi-step one) significantly boosts task success rates while minimizing token length (Figure~\ref{fig:vhop_results}a). Upgrading to a stronger agent improves performance by only 3.7\%, whereas a multi-step embedding model yields a 52.7\% increase (Figure~\ref{fig:vhop_results}b). This validates that enhancing the retrieval tool is the far more effective approach. The following sections detail our data, evaluation, and training methods.

\begin{wraptable}{r}{0.5\textwidth}
    \centering
    \vspace{-0.6\baselineskip}
    \caption{\textbf{Comparison with related QA and visual-retrieval benchmarks.}
    I+T denotes image(s) and text. Visual hop clues reveal the next target in an intermediate image; planning involves avoiding or recovering from dead ends; symbolic verification checks answers against structured facts; training recipe indicates an evaluated solver-training procedure.}
    \label{tab:benchmark_comparison}
    \begingroup
    \fontsize{7.4}{8.5}\selectfont
    \setlength{\tabcolsep}{1.6pt}
    \renewcommand{\arraystretch}{1.02}

    \definecolor{benchmarkshade}{RGB}{235,242,249}
    \definecolor{benchmarkquiet}{RGB}{105,112,121}

    \newcommand{\benchcell}[1]{\begin{tabular}[c]{@{}c@{}}#1\end{tabular}}
    \newcommand{\benchyes}{$\checkmark$}
    \newcommand{\benchno}{\textcolor{benchmarkquiet}{--}}
    \newcommand{\benchdash}{\noalign{\vskip1.5pt
        \hbox to \linewidth{\color{benchmarkquiet}%
            \xleaders\hbox{\rule{2pt}{0.3pt}\hskip2pt}\hfill}%
        \vskip1.5pt}}

    \defcitealias{yang2018hotpotqa}{HotpotQA}
    \defcitealias{trivedi2022musique}{MuSiQue}
    \defcitealias{gong2025phantomwiki}{PhantomWiki}
    \defcitealias{chang2022webqa}{WebQA}
    \defcitealias{mensink2023encyclopedic}{Encyclopedic-VQA}
    \defcitealias{wu2025visual}{Visual Haystacks}
    \defcitealias{wang2025muirbench}{MuirBench}
    \defcitealias{johnson2017clevr}{CLEVR}
    \defcitealias{kordopatis2025ilias}{ILIAS}
    \defcitealias{grauman2022ego4d}{Ego4D-VQ}

    \resizebox{\linewidth}{!}{%
    \begin{tabular}{@{}lccccccc@{}}
        \toprule
        \textbf{Benchmark}
        & \textbf{Query}
        & \textbf{Output}
        & \benchcell{\textbf{Visual}\\\textbf{hop clues}}
        & \benchcell{\textbf{Planning}\\\textbf{ability}}
        & \benchcell{\textbf{Symbolic}\\\textbf{verification}}
        & \benchcell{\textbf{Difficulty}\\\textbf{controls}}
        & \benchcell{\textbf{Training}\\\textbf{recipe}} \\
        \midrule

        \citetalias{yang2018hotpotqa}
            & Text & Text & \benchno & \benchno & \benchno & \benchno & \benchyes \\
        \citetalias{trivedi2022musique}
            & Text & Text & \benchno & \benchno & \benchno & \benchno & \benchyes \\
        \citetalias{gong2025phantomwiki}
            & Text & Text & \benchno & \benchno & \benchyes & Depth, size & \benchno \\

        \benchdash

        \citetalias{chang2022webqa}
            & Text & Text & \benchno & \benchno & \benchno & \benchno & \benchyes \\
        \citetalias{mensink2023encyclopedic}
            & I+T & Text & \benchno & \benchno & \benchno & \benchno & \benchno \\
        \citetalias{wu2025visual}
            & Text & Yes/no & \benchno & \benchno & \benchno & Set size & \benchyes \\
        \citetalias{wang2025muirbench}
            & I+T & Option & \benchno & \benchno & \benchno & \benchno & \benchno \\
        \citetalias{johnson2017clevr}
            & I+T & Text & \benchno & \benchno & \benchyes
            & \benchcell{Scene/question\\complexity} & \benchyes \\

        \benchdash

        \citetalias{kordopatis2025ilias}
            & Image & Images & \benchno & \benchno & \benchno & \benchno & \benchyes \\
        \citetalias{grauman2022ego4d}
            & Image & Track & \benchno & \benchno & \benchno & \benchno & \benchyes \\

        \midrule
        \rowcolor{benchmarkshade}
        \textbf{\vhop{} (ours)}
            & I+T & Image & \benchyes & \benchyes & \benchyes
            & \benchcell{Hops, levels,\\corpus size}
            & \benchyes \\
        \bottomrule
    \end{tabular}
    }

    \endgroup
    \vspace{-0.5\baselineskip}
\end{wraptable}

To train and evaluate multi-step embedding models, we introduce \vhop{}, a flexible data generation framework for visual multi-hop image retrieval. It connects images via shared objects and offers controllable hop counts, object distractors, and varying task difficulty levels (Table~\ref{tab:benchmark_comparison}). For reliable training rewards and rigorous evaluation, verifiers use structured records of objects and relations to ensure the ground-truth labels satisfy all detailed query instructions and include all valid answers.

% Training and evaluation use disjoint object appearances and images to test generalization.

% extends text-based multi-hop QA~\citep{yang2018hotpotqa,trivedi2022musique} to visual search,

Building on these data and benchmarks, we develop \router{}, an end-to-end training framework that transforms a standard single-step embedding model into an autoregressive multi-step retriever. Training combines supervised fine-tuning (SFT), online imitation learning (Online~IL), and reinforcement learning with verifiable rewards (RLVR). At inference, \router{} autoregressively retrieves images and returns a linked image chain in a single tool call based on the initial query and completed hops, without agent text queries for intermediate hops (Figure~\ref{fig:vhop_framework}). Because LLM agents remain completely untouched, \router{} is compatible with any LLM-based agent and preserves all native LLM capabilities.

Through extensive experiments, we demonstrate that \router{} models achieve significant performance gains in both retrieval-only and agentic search tasks while maintaining superior efficiency. Furthermore, these models generalize well to unseen tasks across varying difficulty levels and hop counts, as well as to realistic test sets, ensuring the practical utility of the trained multi-step embedders.

Our contributions are four-fold:
\begin{itemize}[leftmargin=*,nosep,topsep=3pt]
    \item We formulate \emph{multi-hop image retrieval in visual latent space}, a novel task where an embedding model navigates independently in the latent embedding space to achieve multi-hop image retrieval.
    \item We introduce \vhop{}, a flexible and controllable data generation framework and benchmark. It provides verified ground-truth trajectories across varying, easily extendable difficulty levels.
    \item We develop \router{}, an end-to-end framework for training autoregressive multi-step embedding models.
    \item We demonstrate that offloading multi-step search from the agent to the retriever offers a highly effective and efficient solution for overcoming the bottleneck in agentic search, improving the success rate while significantly reducing token length and number of images in the LLM context.
\end{itemize}

\section{Related Work}
\label{sec:related}

Multi-hop Dense Retrieval~\citep{xiong2020answering} and Q-RAG~\citep{sorokin2026q} train retrievers for multi-step text search. \router{} shares this focus on learning the retrieval tool, but learns visual matching and recovery from dead ends. IRCoT~\citep{trivedi2023interleaving} and Search-R1~\citep{jin2025search} use LLM reasoning to guide retrieval; our policy retrieves image chains directly in latent space without generating intermediate text queries or fine-tuning LLMs. ILIAS~\citep{kordopatis2025ilias} and Ego4D's visual queries~\citep{grauman2022ego4d} search for a given object, whereas \vhop{} requires discovering the next target from intermediate images. WebQA~\citep{chang2022webqa} and Visual Haystacks~\citep{wu2025visual} evaluate question answering with visual evidence; our task focuses on retrieving the answer image through a chain of visual clues. Finally, CLEVR~\citep{johnson2017clevr} and PhantomWiki~\citep{gong2025phantomwiki} control reasoning within scenes and text corpora, respectively; \vhop{} controls retrieval paths across images, including hop counts and dead ends, and provides verified answers. Appendix~\ref{app:related} provides further discussion of these methods, benchmarks, and training approaches.

\begin{figure*}[t]
\centering
\includegraphics[width=\textwidth]{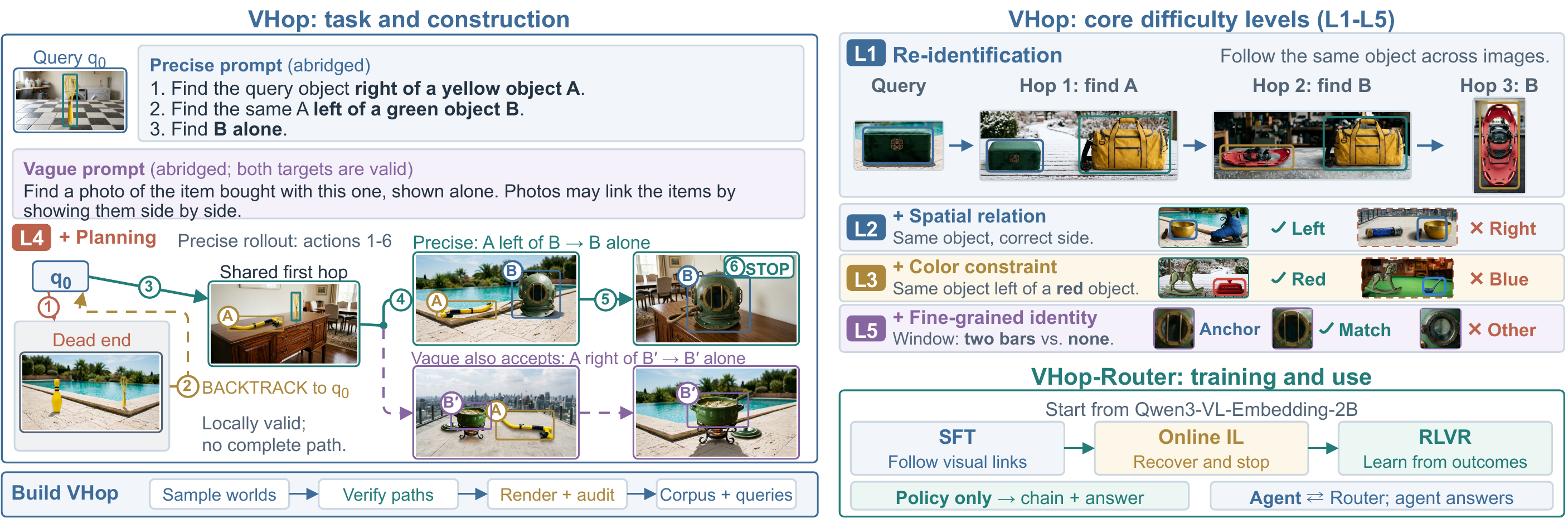}
\caption{\textbf{Overview of \vhop{} and \router{}.}
\textbf{Left:} An L4 search example with precise and vague prompts, dead ends, and backtracking, followed by benchmark construction.
\textbf{Right:} A complete L1 retrieval chain and small image comparisons for L2, L3, and L5, with the \router{} training stages below. Boxes, arrows, and detail crops are reader annotations. Examples of all difficulty levels appear in Figure~\ref{fig:vhop-levels-full} in Appendix~\ref{app:gen}.}
\vspace{-0.2cm}
\label{fig:vhop_framework}
\label{fig:vhop_levels}
\end{figure*}

\section{VHop: Controllable Multi-Hop Visual Search in Latent Space}
\label{sec:gen}

Consider that in real life, photo albums capture everyday moments and can help us locate misplaced objects. A multi-hop retriever can follow visual clues across these photos to find an image showing where an object was last seen, as illustrated in Figure~\ref{fig:natural-example}. We formalize this search process through \vhop{}, a controlled setting for studying multi-hop visual retrieval.

\textbf{Task definition.} Given a starting image and text instruction, the solver retrieves a sequence of images from a corpus, using each intermediate image to determine what to search for next. Each retrieval is a \emph{hop}, and the final retrieved image is the answer. In Figure~\ref{fig:vhop_framework}, the solver first retrieves an image in which the starting object appears to the right of a yellow object $A$, and then uses $A$ as the visual clue for the next hop. We consider two types of prompts. \emph{Precise prompts} specify each step using color and spatial constraints without directly naming the objects, providing a controlled setting for evaluating visual retrieval under explicit guidance. \emph{Vague prompts} omit these constraints and step-by-step instructions, reflecting settings in which users cannot specify the intermediate retrieval steps. We train and evaluate both settings; see details in Section~\ref{sec:exp}.

\paragraph{Controllable difficulty levels.}
\vhop{} provides five core difficulty levels (L1--L5; Figure~\ref{fig:vhop_levels}), each retaining earlier requirements while adding constraints or distractors. L1--L5 link images through the same physical object, with L4 introducing dead ends that require exploration and recovery. In Figure~\ref{fig:vhop_framework}, both a yellow tool and a yellow bowling pin appear to the left of the starting object, satisfying the first step, but only the tool permits a valid continuation. A solver choosing the bowling pin must therefore backtrack and try another candidate. The hop count, corpus size, and number of distractors are also configurable. Matching can also be extended to logos and printed text; Appendix~\ref{sec:ana-limits} describes these extensions and reports their results.

\paragraph{Symbolic generation, rendering, and verification.}
We first construct each image collection symbolically, specifying the object pool, object appearances and attributes, co-occurrence and spatial relations, the answer sequence, and distractors. For precise instructions, an independent verifier checks that exactly one sequence satisfies the query across the entire symbolic collection. If an instruction leaves relations unspecified (vague prompt), the verifier records all valid answers, any of which is accepted during evaluation. We then render the images and check the consistency of object identities, colors, and spatial relations. Training and evaluation use disjoint sets of object appearances. Appendices~\ref{app:gen} and~\ref{app:proofs} provide generation details and formal guarantees, including trajectory uniqueness for precise instructions; Section~\ref{sec:router_data} describes scoring rules and search budgets.

\section{\router{}: Training Framework}
\label{sec:router}

We developed \router{}, an end-to-end training framework that enables embedding models to navigate in visual latent spaces autoregressively through a three-stage process: Supervised fine-tuning (SFT) teaches valid retrieval chains, online imitation learning (Online~IL) teaches recovery and stopping, and reinforcement learning with verifiable rewards (RLVR) optimizes final-answer success. The trained policy can search independently and return image chains to VLM agents.

\vhopneedspace{10\baselineskip}
\subsection{Experimental Setup}
\label{sec:router_data}
\label{sec:exp}

\begin{wraptable}{r}{0.4\linewidth}
    \vspace{-0.5cm}
    \centering
    \caption{\textbf{Data allocation.} Query counts use precise three-hop prompts.}
    \label{tab:l4_data}
    \scriptsize
    \setlength{\tabcolsep}{3pt}
    \renewcommand{\arraystretch}{1.08}
    \begin{tabular}{@{}lcccc@{}}
        \toprule
        Split & Worlds & \shortstack{Queries/\\world} & \shortstack{Images/\\world} &
        \shortstack{Gold\\trajectories} \\
        \midrule
        \shortstack[l]{SFT/\\Online IL} & 1 & 978 & 4,492 & Yes \\
        RLVR       & 10 & 91--100 & 445--450 & No  \\
        \bottomrule
    \end{tabular}
\end{wraptable}

\paragraph{Training Data.}
All policy-training data are generated at L4, the first level with locally valid but globally incorrect branches, requiring exploration and backtracking. Data are organized into \emph{worlds}: a world is one independently generated image corpus together with the queries posed over it, and every query is answered using only the corpus of its own world (Appendix~\ref{app:gen}). SFT/Online~IL, RLVR, and evaluation use mutually disjoint worlds with distinct objects, backgrounds, and queries. Gold trajectories supervise SFT and Online~IL and are withheld for RLVR (Table~\ref{tab:l4_data}).

\paragraph{Evaluation setting.}
\begin{wraptable}{r}{0.2\linewidth}
    \centering
    \vspace{-0.5cm}
    \caption{\textbf{Evaluation sets.}}
    \label{tab:non_l4_eval_data}
    \renewcommand{\arraystretch}{1.08}
    \resizebox{\linewidth}{!}{%
        \begin{tabular}{@{}lrr@{}}
            \toprule
            Level & \shortstack{Queries/\\ world} &
            \shortstack{Images/\\ world} \\
            \midrule
            L1 & 99  & 300 \\
            L2 & 98  & 250 \\
            L3 & 97  & 350 \\
            L4 & 100 & 450 \\
            L5 & 100 & 550 \\
            \bottomrule
        \end{tabular}%
    }
    \vspace{-15pt}
\end{wraptable}
We train \router{} exclusively on L4 and evaluate it on L1--L5 without further fine-tuning. Each non-L4 level contains approximately 100 three-hop queries: L1--L3 test transfer to simpler settings, while L5 tests transfer with visually similar object instances. For L4, we use three independent evaluation worlds with 100 queries each for a more robust assessment. All evaluation worlds are disjoint from training, and non-L4 results measure whether the retrieval, backtracking, and stopping behaviors learned on L4 generalize across difficulty levels, corpora, and visual conditions. Table~\ref{tab:non_l4_eval_data} summarizes the statistics.

\paragraph{Baselines.}

We consider two types of baselines. \emph{(1) Retriever-only methods} search without an LLM agent. We evaluate the open-source Qwen3-VL-Embedding-2B (Qwen3) and the proprietary Gemini Embedding~2 (GE2)~\citep{shanbhogue2026gemini} under three strategies. \emph{Single-shot} retrieval encodes the original multimodal query and returns the top-5 images. \emph{Greedy} retrieval combines the original instruction with the latest retrieved image to select one new image at each of five steps. \emph{History-based} retrieval instead encodes the complete retrieval history. Caption-based variants use cached Gemini-3.1-Flash-Lite captions to test whether textual descriptions preserve the visual clues needed for multi-hop
retrieval. 

\emph{(2) Agentic-search methods} use an LLM to set up tool calls with observations~\citep{yao2022react}. We use Gemini~3.5~Flash with GE2 or Qwen3: the agent inspects the returned images and formulates its next query. To isolate the effect of the retrieval tool, we keep the agent fixed and replace its retriever with \router{}, which returns an image chain for the agent to inspect before answering or continuing the search. We also evaluate stronger agentic baselines by replacing Flash with Gemini~3.1~Pro (Section~\ref{sec:upgrade_agent}) or returning up to 50 images per retrieval call (Section~\ref{sec:ana-topk}).

\paragraph{Metrics.}
We evaluate both \emph{task performance} and \emph{search efficiency}. Task performance is measured by success rate. Embedding and caption-based baselines succeed if a valid answer appears among the five retrieved images. Standalone \router{} succeeds if it stops with a valid answer at the top of the stack, while agentic systems succeed if their declared final answer is valid. For efficiency, we report the total number of generated tokens, image inputs, and API calls per query. Agentic search allows up to  \(16\) tool calls and \(8{,}000\) output tokens per model call. Each \router{} rollout allows up to \(16\) actions; retrieval, backtracking, and stopping each count as one action.

\subsection{Autoregressive Retrieval Policy}
\label{sec:router_policy}

\paragraph{State and actions.}
Let $x$ be the text instruction, $q_0$ the query image, and $\mathcal C$ the image corpus. At decision $t$, the state consists of $x$ and the active stack $\sigma_t=(q_0,q_1,\ldots,q_{d_t})$, where $q_1,\ldots,q_{d_t}$ are the images retained after any backtracking and $d_t$ is the current hop count. The action space is $\{\textsc{Select}(a):a\in\mathcal C\}\cup\{\textsc{Backtrack},\textsc{Stop}\}$. \textsc{Select}$(a)$ appends image $a$ to the stack; \textsc{Backtrack} removes its last retrieved image; \textsc{Stop} ends the search and returns its current image stack. The query image $q_0$ always remains in the active stack.

\paragraph{\textsc{Select}}
A state encoder $E_s$ jointly encodes the instruction $x$ and active stack $\sigma_t$ as $s_t$, while an action encoder $E_a$ independently encodes each candidate image $a$:
\begin{equation}
    s_t=E_s(x,\sigma_t),\qquad e_a=E_a(a),\qquad
    \lVert s_t\rVert_2=\lVert e_a\rVert_2=1.
    \label{eq:router_encoders}
\end{equation}
Both encoders start from Qwen3-VL-Embedding-2B and use LoRA~\citep{hu2021lora}. We freeze $E_a$ after SFT and reuse its cached corpus embeddings during Online~IL, RLVR, and inference. Candidates are scored by cosine similarity, $z_t(a)=s_t^\top e_a$. After masking ineligible images, we select from the top-$K$ candidates $\mathcal C_t$ using a softmax with temperature $\tau$:
\begin{equation}
    \pi_\theta^{\mathrm{sel}}(a\mid s_t,\mathcal C_t)
    =\frac{\exp(z_t(a)/\tau)}
    {\sum_{a'\in\mathcal C_t}\exp(z_t(a')/\tau)}.
    \label{eq:select_policy}
\end{equation}

\paragraph{\textsc{Backtrack} and \textsc{Stop}.}
Two learned gates control stopping and backtracking. Each is a two-layer MLP over \([s_t;c_t]\), where \(c_t\in\mathbb{R}^4\) contains four retrieval statistics, such as the maximum candidate score (Appendix~\ref{app:online-il}). During stochastic rollouts, the gate probabilities are
\begin{equation}
    p_t^{\mathrm{stop}}
    =
    \operatorname{sigmoid}\!\left(
        g_{\mathrm{stop}}([s_t;c_t])/\tau_g
    \right),
    \qquad
    p_t^{\mathrm{bt}}
    =
    \operatorname{sigmoid}\!\left(
        g_{\mathrm{bt}}([s_t;c_t])/\tau_g
    \right),
    \label{eq:router_gates}
\end{equation}
Here, \(\tau_g\) is the gate temperature. Write $z_t^{\mathrm{stop}}=\mathbf{1}[a_t=\textsc{Stop}]$ and $z_t^{\mathrm{bt}}=\mathbf{1}[a_t=\textsc{Backtrack}]$ for the two action indicators. The full policy distribution is
\begin{equation}
    \pi_\theta(a_t\mid s_t)=
    \bigl(p_t^{\mathrm{stop}}\bigr)^{z_t^{\mathrm{stop}}}
    \Bigl[\bigl(1-p_t^{\mathrm{stop}}\bigr)
    \bigl(p_t^{\mathrm{bt}}\bigr)^{z_t^{\mathrm{bt}}}
    \Bigl(\bigl(1-p_t^{\mathrm{bt}}\bigr)\,
    \pi_\theta^{\mathrm{sel}}(a\mid s_t,\mathcal C_t)\Bigr)^{1-z_t^{\mathrm{bt}}}
    \Bigr]^{1-z_t^{\mathrm{stop}}}.
    \label{eq:router_policy}
\end{equation}
At evaluation time, the policy acts deterministically: it stops if \(p_t^{\mathrm{stop}}\geq 0.5\), otherwise backtracks if \(p_t^{\mathrm{bt}}\geq 0.5\), and otherwise selects the highest-scoring image.

A rollout $\rho=(a_0,\ldots,a_T)$ records all decisions, including selections
later undone by backtracking. Its prefix $\rho_{<t}$ determines the active
stack $\sigma_t$ before decision $t$. Its log-probability under $\pi_\theta$ is
\begin{equation}
    \log\pi_\theta(\rho\mid x,q_0)
    =\sum_{t=0}^{T}\log\pi_\theta(a_t\mid s_t).
    \label{eq:router_traj}
\end{equation}

\subsection{Training Pipeline}
\label{sec:router_training}

The three stages address successive requirements of sequential retrieval: ranking the correct continuation, recovering from mistakes, and optimizing final-answer success. 

\subsubsection{Supervised Fine-Tuning}
\label{sec:router_sft}

SFT teaches next-hop retrieval from gold trajectory prefixes. For a gold trajectory \(g_{1:H}\) and hop \(k\in\{1,\ldots,H\}\), the state encoder reads the instruction \(x\), query image \(q_0\), and prefix \(g_{1:k-1}\). The InfoNCE objective~\citep{oord2018representation} aligns the resulting state with the next gold image \(g_k\):
\begin{equation}
\begin{aligned}
    s_{k-1}
    &=
    E_s\!\left(x,(q_0,g_{1:k-1})\right),
    \qquad
    &
    \mathcal{L}_k
    &=
    -\log
    \frac{
        \exp\!\left(\langle s_{k-1},e_{g_k}\rangle/\tau\right)
    }{
        \sum_{a\in\mathcal{C}}
        \exp\!\left(\langle s_{k-1},e_a\rangle/\tau\right)
    }.
\end{aligned}
\label{eq:sft_loss}
\end{equation}
Training proceeds in two phases: we first freeze \(E_a\) and update only \(E_s\), then jointly optimize both encoders. This warm-up stabilizes training and prevents collapse (Figure~\ref{fig:sft}).

\subsubsection{Online Imitation Learning for Recovery and Stopping}
\label{sec:router_online_il}

Online~IL trains on states visited during search to teach recovery and stopping~\citep{ross2011reduction}. Each iteration collects fresh rollouts using the current policy, oracle guidance, and dead-end exploration. For each prefix $\rho_{<t}$, we recover the active stack $\sigma_t$, encode it as $s_t$, and obtain the oracle action $a_t^\star$. We freeze $E_a$ and update $E_s$ and the gates using the per-state loss in Eq.~\ref{eq:online_il_losses}. Only states collected in the current iteration are used for the update. Algorithm~\ref{alg:online_il} summarizes this procedure.

For oracle labels $y_t^{\mathrm{bt}}=\mathbf{1}[a_t^\star=\textsc{Backtrack}]$ and $y_t^{\mathrm{stop}}=\mathbf{1}[a_t^\star=\textsc{Stop}]$, the loss is
\begin{equation}
    \ell_t=-(1-y_t^{\mathrm{bt}})(1-y_t^{\mathrm{stop}})
    \log\pi_\theta^{\mathrm{sel}}(g_t\mid s_t,\mathcal C_t)
    +B_t(y_t^{\mathrm{bt}})+S_t(y_t^{\mathrm{stop}}).
    \label{eq:online_il_losses}
\end{equation}
The selection term applies only when the oracle selects image $g_t$, which is added to $\mathcal C_t$ if absent. The terms $B_t(y)$ and $S_t(y)$ are binary cross-entropy losses for the backtrack and stop gates (Eq.~\ref{eq:router_gates}), with target $y\in\{0,1\}$. Appendix~\ref{app:train} specifies their weights and action masks.

\begin{wrapfigure}{r}{0.38\textwidth}
\vspace{-1.5cm}
\begin{minipage}{\linewidth}
\captionsetup{hypcap=false}
\begin{algorithm}[H]
\caption{Online IL training}
\label{alg:online_il}
\small
\begin{algorithmic}[1]
\Require SFT checkpoint; oracle $\pi^\star$
\State Freeze action encoder $E_a$
\For{each training iteration}
    \State Collect fresh rollouts $\rho$
    \For{each rollout prefix $\rho_{<t}$}
        \State Replay $\rho_{<t}$ to recover $\sigma_t$
        \State Encode $s_t=E_s(x,\sigma_t)$
        \State Obtain $a_t^\star=\pi^\star(x,\sigma_t)$
    \EndFor
    \State Update $E_s$ and gates using $\ell_t$
\EndFor
\end{algorithmic}
\end{algorithm}
\end{minipage}
\vspace{-\intextsep}
\end{wrapfigure}

\subsubsection{Outcome-Based Reinforcement Learning}
\label{sec:router_rl}

Starting from the Online~IL checkpoint, RLVR trains $E_s$ and the two gate MLPs on the separate RLVR data in Table~\ref{tab:l4_data}, with $E_a$ fixed. For each query, we sample $G$ rollouts $\rho_g$ and assign $r_g=1$ only when the policy stops with a valid answer, and $0$ otherwise. Each rollout receives a leave-one-out advantage $A_g$~\citep{ahmadian2024back}. We use a PPO-style clipped surrogate~\citep{schulman2017proximal} with DAPO-style asymmetric clipping~\citep{yu2026dapo}. Algorithm~\ref{alg:rl} summarizes training, and Figure~\ref{fig:three_panels} shows its dynamics. Appendix~\ref{app:rl} details the loss terms, clipping, and KL penalties.

\vhopneedspace{8\baselineskip}
\section{Results}
\label{sec:router_results}
We evaluate combinations of four independent configurations. First, \textit{task difficulty} tests capability across query complexities. Second, we vary the \textit{number of hops}. Third, we compare \textit{precise prompts}---which give detailed instructions and constrain valid answers---against \textit{vague prompts} that omit step-by-step guidance to mimic real-world searches. Finally, we train with \textit{fixed} or \textit{mixed} hop counts. By default, training uses L4 (requiring planning and navigation around dead ends), 1--3 mixed hops, and precise prompts. We evaluate L1--L5 and ablate settings to test generalization, alongside a test set reflecting realistic user scenarios.

\subsection{Significant Improvements Across Difficulty Levels}
\label{sec:ana-vlm}

Table~\ref{tab:main} reports three-hop results across L1--L5 for standalone retrieval and agentic search. We train \router{} with precise or vague prompts, using either three-hop-only or mixed-hop data. On all these four training configurations, \router{} improved retrieval and agentic search across 5 difficulty levels significantly over baseline retrievers.

With our default mixed-hop training with precise prompts, standalone \router{} reaches \textbf{76.3\%} success on L4, compared with \textbf{3.7\%} for the strongest standard single-step retriever. It also improves agentic search: replacing Qwen3 with \router{} while keeping the agent fixed as Gemini~3.5~Flash raises success from \textbf{32.7\% to 85.4\% at in-domain L4 difficulty} and from \textbf{28.0\% to 84.0\% on L5 with zero-shot transfer}.

\begin{table*}[t]
\centering
\caption{\textbf{Three-hop success rate (\%) across L1--L5.}
Left: standalone retrieval. Right: Gemini-3.5-Flash with different tools. Blocks~(a) and~(b) use precise and vague prompts, respectively. Shaded columns show \router{} with three-hop-only and mixed-hop training; other baselines are shared across both regimes. Within each row, \textbf{bold} and \underline{underlining} mark the best and second-best scores on each side of the dashed rule; ties share a rank.}
\label{tab:main}
\scriptsize
\setlength{\tabcolsep}{2.6pt}
\renewcommand{\arraystretch}{1.0}
\resizebox{\textwidth}{!}{%
\begin{tabular}{@{}l *{12}{c} >{\columncolor{routerbg}}c >{\columncolor{routerbg}}c !{\dvrule} cc >{\columncolor{routerbg}}c >{\columncolor{routerbg}}c@{}}
\toprule
& \multicolumn{14}{c}{Retriever only } & \multicolumn{4}{!{\dvrule}c}{Agentic search} \\
\cmidrule(lr){2-15}\cmidrule(lr){16-19}
& \multicolumn{6}{c}{Embedding retrieval} & \multicolumn{6}{c}{Caption $\rightarrow$ embedding}
& \multicolumn{2}{>{\columncolor{routerbg}}c}{\router} & GE2 & Qwen3 & \multicolumn{2}{>{\columncolor{routerbg}}c@{}}{$+$\,\router} \\
\cmidrule(lr){2-7}\cmidrule(lr){8-13}\cmidrule(lr){14-15}\cmidrule(lr){18-19}
& \multicolumn{3}{c}{GE2} & \multicolumn{3}{c}{Qwen3} & \multicolumn{3}{c}{GE2} & \multicolumn{3}{c}{Qwen3}
& $3$-hop & mixed & & & $3$-hop & mixed \\
\cmidrule(lr){2-4}\cmidrule(lr){5-7}\cmidrule(lr){8-10}\cmidrule(lr){11-13}
Level & 1-shot & Greedy & Hist. & 1-shot & Greedy & Hist.
& 1-shot & Greedy & Hist. & 1-shot & Greedy & Hist. & & & & & & \\
\midrule
\multicolumn{19}{@{}l}{\textbf{(a) Precise prompt}}\\
\midrule
L1 (re-ID) & 2.0 & 6.1 & 2.0 & 5.1 & 9.1 & 6.1 & 1.0 & 0.0 & 1.0 & 6.1 & 4.0 & 3.0 & \textbf{66.7} & \underline{61.6} & 60.6 & 73.7 & \textbf{85.9} & \underline{80.8} \\
L2 (spatial) & 4.1 & 5.1 & 4.1 & 4.1 & 8.2 & 4.1 & 1.0 & 2.0 & 1.0 & 6.1 & 2.0 & 3.1 & \underline{43.9} & \textbf{59.2} & 58.2 & \underline{69.4} & 62.2 & \textbf{80.6} \\
L3 (color) & 3.1 & 5.2 & 2.1 & 3.1 & 7.2 & 2.1 & 1.0 & 0.0 & 0.0 & 0.0 & 1.0 & 0.0 & \underline{44.3} & \textbf{85.6} & 56.7 & \underline{77.3} & 58.8 & \textbf{89.7} \\
L4 (dead ends) & 1.3 & 3.7 & 1.7 & 1.0 & 3.3 & 1.0 & 0.0 & 0.0 & 0.0 & 0.7 & 1.0 & 0.3 & \underline{48.0} & \textbf{76.3} & 19.3 & 32.7 & \underline{59.7} & \textbf{85.4} \\
L5 (twins) & 1.0 & 2.0 & 2.0 & 3.0 & 4.0 & 2.0 & 0.0 & 0.0 & 0.0 & 1.0 & 1.0 & 1.0 & \underline{42.0} & \textbf{72.0} & 15.0 & 28.0 & \underline{57.0} & \textbf{84.0} \\
\midrule
\multicolumn{19}{@{}l}{\textbf{(b) Vague prompt}}\\
\midrule
L1 (re-ID) & 1.0 & 1.0 & 1.0 & 3.0 & 5.1 & 2.0 & 0.0 & 0.0 & 1.0 & 3.0 & 2.0 & 2.0 & \textbf{46.5} & \underline{41.4} & 11.1 & 9.1 & \textbf{67.7} & \underline{45.5} \\
L2 (spatial) & 2.0 & 4.1 & 6.1 & 0.0 & 8.2 & 2.0 & 2.0 & 2.0 & 2.0 & 0.0 & 2.0 & 2.0 & \textbf{55.1} & \underline{36.7} & 6.1 & 0.0 & \textbf{77.6} & \underline{51.0} \\
L3 (color) & 0.0 & 6.2 & 4.2 & 4.2 & 6.2 & 4.2 & 2.1 & 4.2 & 0.0 & 0.0 & 0.0 & 0.0 & \textbf{56.2} & \underline{29.2} & 6.2 & 6.2 & \textbf{72.9} & \underline{33.3} \\
L4 (dead ends) & 2.0 & 1.3 & 1.3 & 0.7 & 0.0 & 0.7 & 0.0 & 2.0 & 0.0 & 0.0 & 0.0 & 0.0 & \underline{38.0} & \textbf{42.0} & 6.0 & 2.0 & \textbf{55.3} & \underline{50.7} \\
L5 (twins) & 4.0 & 2.0 & 2.0 & 2.0 & 0.0 & 0.0 & 0.0 & 0.0 & 0.0 & 0.0 & 0.0 & 0.0 & \textbf{42.0} & \underline{24.0} & 0.0 & 2.0 & \textbf{50.0} & \underline{32.0} \\
\bottomrule
\end{tabular}%
}
\end{table*}

\subsection{Retriever Upgrades Yield 14x the Gain of Agent Upgrades}
\label{sec:upgrade_agent}

\begin{wraptable}{l}{0.3\textwidth}
\vspace{-0.5cm}
  \centering
  \caption{\textbf{Upgrading agent vs.\ upgrading tool}}
  \label{tab:upgrade_agent}
  \scriptsize
  \setlength{\tabcolsep}{2.6pt}
  \renewcommand{\arraystretch}{1.0}
  \resizebox{\linewidth}{!}{%
    \begin{tabular}{@{}lcc>{\columncolor{routerbg}}c@{}}
      \toprule
      & \multicolumn{2}{c}{Qwen3}
      & \router{} \\
      \cmidrule(lr){2-3}
      \cmidrule(lr){4-4}
      Level & Flash & Pro & Flash \\
      \midrule
      L1 & 73.7 & \underline{78.8}
         & \textbf{80.8} \\
      L2 & 69.4 & \underline{77.6}
         & \textbf{80.6} \\
      L3 & 77.3 & \underline{\textbf{90.7}}
         & 89.7 \\
      L4 & 32.7 & \underline{36.3}
         & \textbf{85.4} \\
      L5 & 28.0 & \underline{34.0}
         & \textbf{84.0} \\
      \bottomrule
    \end{tabular}%
  }
  % \vspace{-1cm}
\end{wraptable}
On L4 and L5, upgrading the tool yields larger gains than upgrading the
agent. Replacing Flash with Gemini~3.1~Pro while keeping Qwen3 adds only
3.7 percentage points on L4, compared with 52.7 points upgrading tool
from Qwen3 to \router{} (Figure~\ref{fig:vhop_results}b,
Table~\ref{tab:upgrade_agent}). On L5, the corresponding gains are
6.0 and 56.0 points. Appendix Figure~\ref{fig:vlm-full} reports both
tool training configurations across all levels.

\subsection{Can Retrieving More Images Match a Learned Multi-Step Retriever?}
\label{sec:ana-topk}

Agentic baselines in Table~\ref{tab:main} return only the top-ranked image (\(k=1\)) per call. Could a single-step retriever close the gap by returning more candidates for the agent to reason over? We evaluate GE2 and Qwen3 at \(k\in\{1,3,5,10,50\}\), keeping Gemini~3.5~Flash fixed. The L4 test corpus contains \(450\) images, so \(k=50\) exposes more than one ninth per call. Increasing \(k\) improves success, but neither retriever matches \router{} (\(85.4\%\)) even at \(k=50\) (Figure~\ref{fig:topk-analysis}). Visual context also grows rapidly. With Qwen3 at $k=50$, the agent averages $275$ retrieved images in context per question and a cumulative API payload of $1{,}163$ image inputs across turns. In contrast, Flash with \router{} averages $2.1$ tool calls, $12$ returned images, and a cumulative API payload of $33$ images. It thus \textbf{outperforms while requiring $~23\times$ fewer images in context and a $35\times$ smaller cumulative API payload} than the $k=50$ baselines. All results use precise three-hop test queries; Appendix Table~\ref{tab:topk-full} includes the three-hop-trained tool and other levels.

\begin{figure*}[ht]
    \centering
    \captionsetup{skip=3pt}
    \begin{subfigure}[t]{0.32\textwidth}
        \vspace{0pt}
        \centering
        \includegraphics[width=\linewidth]{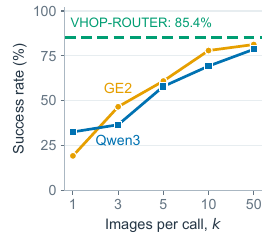}
        \caption{Success rate.}
        \label{fig:topk}
    \end{subfigure}
    \hfill
    \begin{subfigure}[t]{0.32\textwidth}
        \vspace{0pt}
        \centering
        \includegraphics[width=\linewidth]{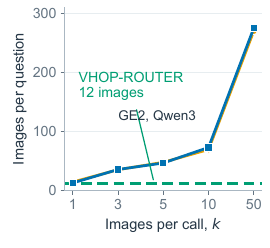}
        \caption{Images in context.}
        \label{fig:topk-context}
    \end{subfigure}
    \hfill
    % Preserve the outer figure counter across the embedded table caption.
    \edef\vhopfigurecaptionflags{\arabic{caption@flags}}%
    \begin{minipage}[t]{0.32\textwidth}
        \vspace{0pt}
        \centering
        \fontsize{6.5}{7.5}\selectfont
        \setlength{\tabcolsep}{2pt}
        \renewcommand{\arraystretch}{1.05}
        \resizebox{\linewidth}{!}{%
        \begin{tabular}{@{}lrrrr@{}}
            \toprule
            Retrieval tool & \(k\) & Calls & Imgs & Sent \\
            \midrule
            \multirow{5}{*}{GE2}
              & 1  & 14.5 & 15  & 120  \\
              & 3  & 11.8 & 35  & 249  \\
              & 5  & 9.6  & 48  & 300  \\
              & 10 & 6.9  & 69  & 344  \\
              & 50 & 5.4  & 270 & 1,084 \\
            \midrule
            \multirow{5}{*}{Qwen3}
              & 1  & 13.0 & 13  & 104  \\
              & 3  & 11.9 & 36  & 255  \\
              & 5  & 9.3  & 47  & 295  \\
              & 10 & 7.3  & 73  & 392  \\
              & 50 & 5.5  & 275 & 1,163 \\
            \midrule
            \router{}
              & chain & 2.1 & 12 & 33 \\
            \bottomrule
        \end{tabular}%
        }
        \captionof{table}{Tool call usage.}
        \label{tab:topk}
    \end{minipage}%
    \setcounter{caption@flags}{\vhopfigurecaptionflags}

\caption{\textbf{Wider retrieval does not outperform \router{}.}
Gemini~3.5~Flash uses precise three-hop L4 test queries; \router{} uses precise mixed-hop training.
\textbf{Left:} Success rate; dashed line: Flash with \router{}.
\textbf{Middle:} Returned images per question.
\textbf{Right:} Mean calls, returned images (\emph{Imgs}), and image occurrences across agent requests (\emph{Sent}), per question.}
    \label{fig:topk-analysis}
    \vspace{-0.5cm}
\end{figure*}

\subsection{Generalization on One- and Two-Hop Queries}
\label{sec:ana-hops}

We evaluate \router{} on one- and two-hop queries across L1--L5, using the same checkpoints and held-out worlds as in the three-hop evaluation. Under the default precise mixed-hop configuration, the standalone \router{} achieves a \textbf{99.7\%} success rate on one-hop L4 queries, showing strong performance on single-step retrieval. Furthermore, it reaches \textbf{90.0\%} success on two-hop L4 queries. Together, these results show that one mixed-hop policy handles both shorter query types without further fine-tuning. Comprehensive results for all four training configurations and evaluation levels are detailed in Appendix Tables~\ref{tab:main-h1} and~\ref{tab:main-h2}.
\par

% Preamble: \usepackage{wrapfig}

\subsection{Robustness to Additional Distractor Objects}
\label{sec:ana-three-objects}

In the standard evaluation (Table~\ref{tab:main}), each hop image contains two foreground objects. To test robustness, we add a third object to every hop image in the L4 test set. These objects introduce competing visual cues without changing the answers. This modification substantially increases retrieval difficulty (Figure~\ref{fig:three_objects}): success for the strongest standalone policy, \router{} with precise mixed-hop training, drops from \(76.3\%\) to \(52.2\%\). However, it still exceeds the strongest agentic baseline with a single-step retriever (\(27.4\%\)) by \(24.8\) percentage points, retaining a substantial advantage under this unseen visual shift.

\begin{figure*}[ht]
    \centering
    \captionsetup{skip=3pt}
    \begin{subfigure}[t]{0.27\textwidth}
        \vspace{0pt}
        \centering
        \includegraphics[width=\linewidth]{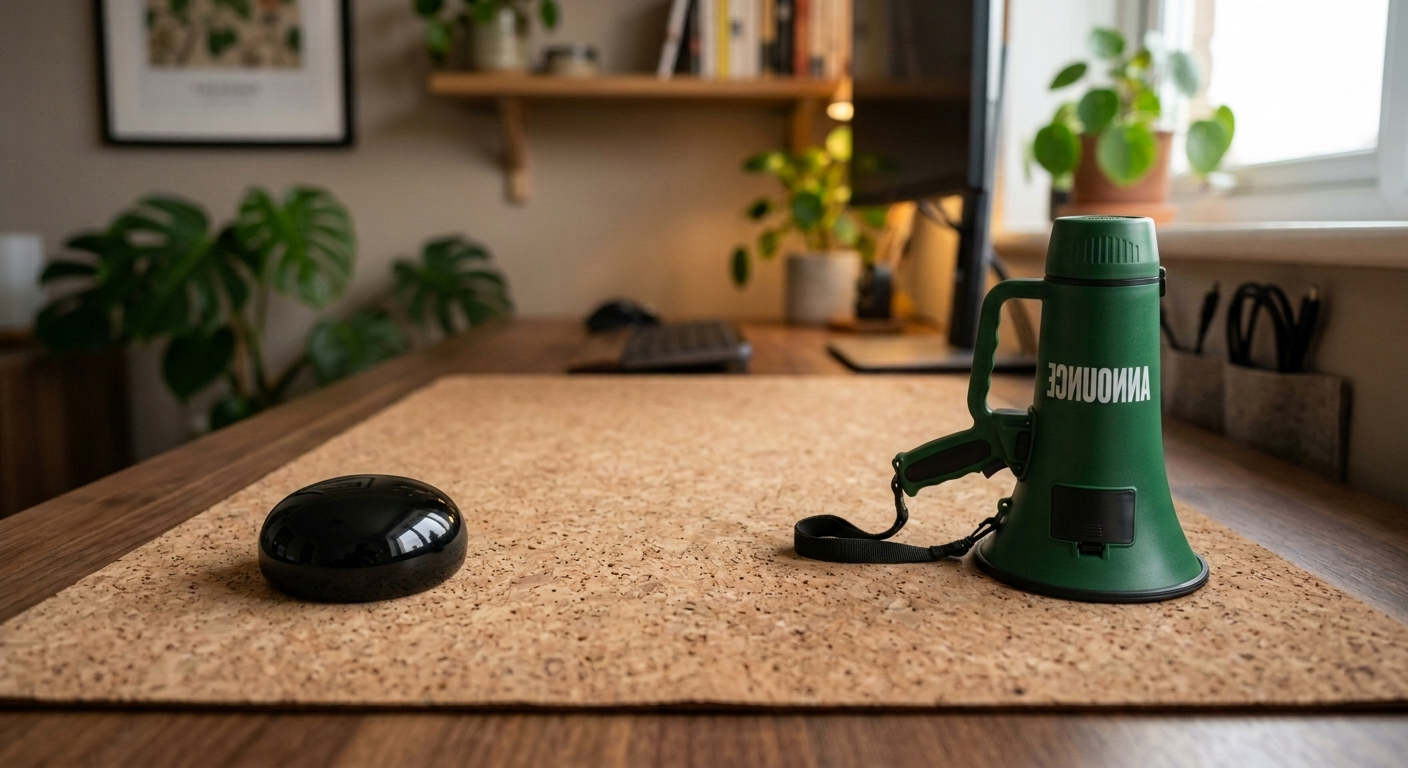}
        \caption{Standard two-object hop image.}
        \label{fig:probe-hop1-l4}
    \end{subfigure}
    \hfill
    \begin{subfigure}[t]{0.27\textwidth}
        \vspace{0pt}
        \centering
        \includegraphics[width=\linewidth]
            {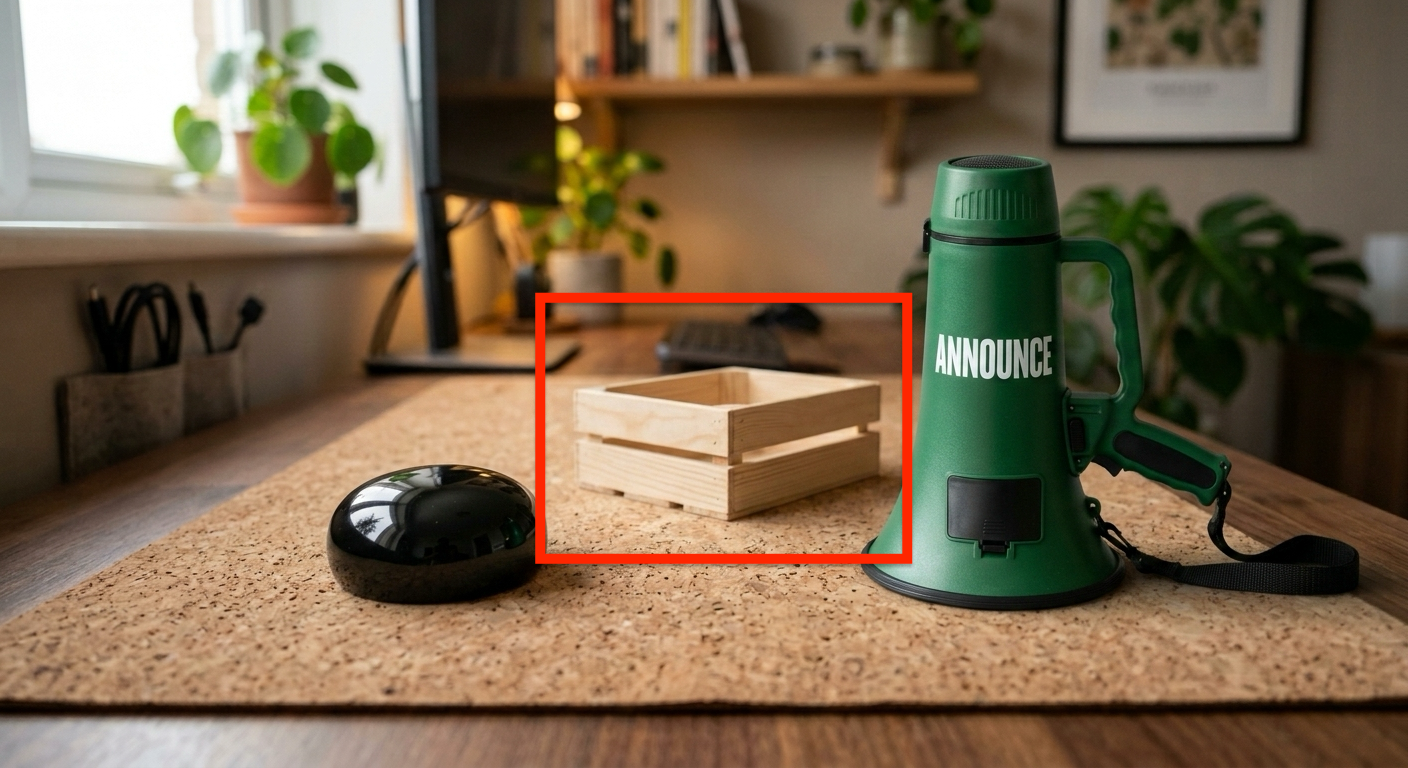}
        \caption{The same setting with an added distractor.}
        \label{fig:probe-hop1-l4d3}
    \end{subfigure}
    \hfill
    % Preserve the outer figure counter across the embedded table caption.
    \edef\vhopfigurecaptionflags{\arabic{caption@flags}}%
    \begin{minipage}[t]{0.42\textwidth}
        \vspace{-5pt}
        \centering
        \captionsetup{type=table}
        \caption{\textbf{L4 answer accuracy (\%).}
        \router{} policy alone.}
        \label{tab:robust}

        \renewcommand{\arraystretch}{1.05}
        \resizebox{\linewidth}{!}{%
    \begin{tabular}{@{}llrr@{}}
        \toprule
        Method & Training configuration
            & \(2\) objects & \(3\) objects \\
        \midrule
        \multirow{4}{*}{\router}
            & Precise, \(3\)-hop
            & 48.0 & 30.4 \\
            & Precise, mixed-hop
            & \textbf{76.3} & \textbf{52.2} \\
            & Vague, \(3\)-hop
            & 38.0 & 22.0 \\
            & Vague, mixed-hop
            & 42.0 & 17.3 \\
        \midrule
        Flash \(+\) GE2
            & \multicolumn{1}{c}{-}
            & 19.3 & 16.4 \\
        Flash \(+\) Qwen3
            & \multicolumn{1}{c}{-}
            & 32.7 & 27.4 \\
        \bottomrule
    \end{tabular}%
}
    \end{minipage}%
    \setcounter{caption@flags}{\vhopfigurecaptionflags}

    \caption{\textbf{Robustness to a foreground distractor.}
    \textbf{Left:} A standard \vhop{} image.
    \textbf{Middle:} A variant with a third, unseen distractor.
    \textbf{Right:} Retrieval success with distractor.}
    \label{fig:three_objects}
    \vspace{-0.5cm}
\end{figure*}

\subsection{Generalization to Realistic Queries and Settings}
\label{sec:ana-natural}

\setlength{\intextsep}{4pt}
\begin{wraptable}{r}{0.48\textwidth}
    \centering
    \captionsetup{skip=3pt}
    \caption{\textbf{Task success on realistic queries.}
    Answer found among returned images (\%).}
    \label{tab:natural}
    \renewcommand{\arraystretch}{1.05}
    \resizebox{\linewidth}{!}{%
        \begin{tabular}{@{}llr@{}}
            \toprule
            Agentic Search Method & Training configuration  & Success rate \\
            \midrule
            \multirow{2}{*}{Flash \(+\) \router{}}
                & Precise, \(3\)-hop & \textbf{36.0} \\
                & Precise, mixed-hop & 34.0 \\
            \midrule
            Flash \(+\) GE2
                &  \multicolumn{1}{c}{-} & 22.0 \\
            Flash \(+\) Qwen3
                &  \multicolumn{1}{c}{-} & 26.0 \\
            \bottomrule
        \end{tabular}%
    }
    \vspace{-0.2cm}
\end{wraptable}

All preceding experiments use procedurally generated images and prompts. To test the generalization capability in real-world scenarios, we curate another test set with realistic queries and settings including \(50\) questions and \(386\) images. Annotators write the questions and specify everyday scenes. The images are re-rendered to remove privacy-sensitive details while preserving the \mbox{intended} content. Each query requires following visual clues to locate a target item, with an image of the corresponding box serving as the final answer (Figure~\ref{fig:natural-example}).

Without additional training, Flash with \router{} retrieves the correct answer among its returned images on \textbf{36.0\%} of queries with precise three-hop training and \textbf{34.0\%} with precise mixed-hop training, compared with \textbf{22.0\%} for Flash with GE2 and \textbf{26.0\%} with Qwen3 (Table~\ref{tab:natural}). Full results, including standalone \router{}, appear in Appendix~\ref{app:human-authored}.

\begin{figure}
    \centering
    \includegraphics[width=\linewidth]{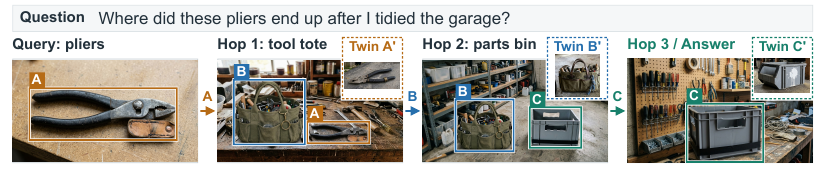}
    \caption{\textbf{A realistic retrieval task.}
    Dashed insets show corpus distractors depicting the same object types but different physical instances. Zoom in for details.}
    \label{fig:natural-example}
    \vspace{-0.65cm}
\end{figure}

\section{Analysis}
\subsection{Controlling Rollout Length in RLVR}
\label{sec:ana-lenpen}

\begingroup
\setlength{\intextsep}{4pt}
\begin{wrapfigure}{r}{0.49\textwidth}
    \centering
    \captionsetup{skip=3pt}
    \includegraphics[width=\linewidth]{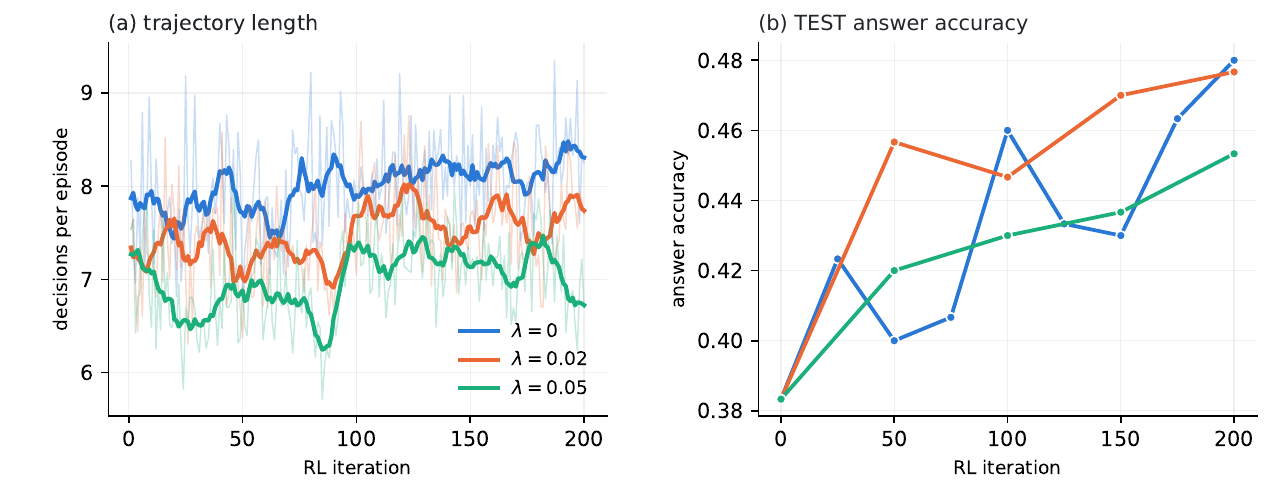}
    \caption{\textbf{RLVR length-penalty dynamics.}
    \textbf{(a)} Mean $|\rho|$.
    \textbf{(b)} L4 test success rate.}
    \label{fig:lenpen}
\end{wrapfigure}

The default RLVR objective rewards only answer correctness, without a search cost (Section~\ref{sec:router_rl}). Training thus lengthens rollouts and increases budget exhaustion (Figure~\ref{fig:three_panels}). To test whether a length penalty shortens rollouts, we anneal the terminal reward as
$
    r_{\lambda}
    =
    r-\lambda |\rho|
$, $\lambda\in\{0,0.02,0.05\}$.

Increasing \(\lambda\) consistently shortens rollouts (Figure~\ref{fig:lenpen}). On the L4 test set, \(\lambda=0.02\) achieves accuracy comparable to the unpenalized objective, whereas \(\lambda=0.05\) degrades performance. A modest penalty thus removes unnecessary actions without sacrificing accuracy; a stronger penalty affects the accuracy reward. Further results appear in Appendix~\ref{sec:appendix:lenpen} and Table~\ref{tab:lenpen}.
\WFclear
\endgroup

\subsection{Trajectory Pattern Analysis}
\label{sec:trajectory-analysis}

Representative trajectories in Appendix~\ref{app:cases} illustrate the contributions of each training stage and the agent. \textbf{SFT} improves visual matching, retrieving the correct first hop where the untrained model fails, but can still follow an incorrect spatial relation. \textbf{Online~IL} learns backtracking and stopping but may stop on a wrong branch. \textbf{RLVR} refines these decisions through final-answer rewards, recovering from the same initial mistake as Online~IL, completing the correct chain, and stopping at the answer (Figure~\ref{fig:case_a_rl}). The agent further complements \router{} through \textbf{answer selection} and \textbf{re-querying}: it selects the correct answer from the returned chain when \router{} continues searching until its budget runs out (Figure~\ref{fig:answer-selection}), or revises the query and calls the same policy again when the answer is missing, recovering the complete correct image chain (Figure~\ref{fig:case_b_agent_requery}).

\section{Conclusion and Limitations}
\label{sec:conclusion}

We address visual agentic search bottlenecks by offloading multi-hop search directly to the retrieval tool. We introduce \vhop{}, a benchmark and data generation framework, alongside \router{}, an end-to-end training pipeline for autoregressive multi-step embedders. \router{} significantly boosts retrieval and agentic search success rates while drastically reducing LLM context sizes and API payloads and preserving all native LLM capabilities. Scaling the data generation framework to more objects remains challenging because it relies on explicitly defined relationships. The trained policy also struggles to generalize beyond the hop counts seen during training. Future work will focus on scaling both \vhop{} and \router{} and incorporating real-world datasets.

\section{Acknowledgment}

We extend our sincere gratitude to Shanfeng Zhang, Mojtaba Seyedhosseini, Howard Zhou, and Tom Duerig for providing essential resources, alongside their invaluable guidance and support throughout this project. Additionally, we thank Shengcao Cao for reviewing the manuscript and offering insightful technical feedback.

\FloatBarrier
\subsubsection*{Reproducibility statement}
The stdlib-only generator and verifier are deterministic from stored seeds and $\theta$.
Appendix~\ref{app:gen} specifies rendering checks, data splits, and prompts;
Appendix~\ref{app:train} gives training procedures and hyperparameters.
Section~\ref{sec:exp} defines baselines and metrics.

\subsubsection*{AI use statement}
We used AI tools to assist with manuscript drafting and editing, figure preparation, and reference checking.
Generative models were also used to render benchmark images (Appendix~\ref{app:gen}) and produce captions for retrieval baselines (Section~\ref{sec:router_data}).
The authors take responsibility for the final content, including the text, figures, references, and experimental claims.

\bibliography{iclr2027_conference}

\appendix
% The appendix is split by topic; main-text inputs are intentionally unchanged.
\clearpage
\section{Additional Related Work}
\label{app:related}
\paragraph{Multi-hop retrieval and question answering.}
HotpotQA~\citep{yang2018hotpotqa} requires combining evidence from multiple
Wikipedia articles, while MuSiQue~\citep{trivedi2022musique} composes
single-hop questions into connected reasoning chains and filters out
shortcuts. Multi-hop Dense Retrieval~\citep{xiong2020answering} learns to retrieve
a passage conditioned on earlier evidence.
IRCoT~\citep{trivedi2023interleaving} interleaves retrieval with reasoning so that
each informs the next step. VHOP studies a visual form of this dependency: a retrieved image supplies
the object or mark needed to find the next image, and the final answer is
itself an image.

\paragraph{Visual retrieval and memory.}
CLIP~\citep{radford2021learning} learns a shared image--text embedding space
that supports retrieval by similarity. Instance-level benchmarks such as
ILIAS~\citep{kordopatis2025ilias} test whether a system can find the
same particular object among many distractors. Ego4D's visual-query
task~\citep{grauman2022ego4d} uses an object image to locate its last
appearance in an egocentric video. VHOP extends such visual search to a chain of linked objects: the starting image may not show the final
target, so directly matching that image to the answer is insufficient.
The solver must use intermediate images to determine what to retrieve next.

\paragraph{Multimodal and multi-image reasoning.}
WebQA~\citep{chang2022webqa} combines evidence retrieval from image or text
sources with natural-language answer generation.
Encyclopedic-VQA~\citep{mensink2023encyclopedic} asks detailed questions
about visually identified entities and provides a Wikipedia knowledge base.
Visual Haystacks~\citep{wu2025visual} tests question answering over large
image collections, including questions that require evidence from multiple
images. MuirBench~\citep{wang2025muirbench} covers a broad range of
multi-image tasks, including visual retrieval. VHOP focuses on a specific question: can a system repeatedly recover a new visual clue,
follow it through a corpus, and recover when a locally plausible match
does not lead to an answer? Its generator makes these dependencies and
alternative branches explicit for controlled evaluation.

\paragraph{Controlled evaluation.}
CLEVR~\citep{johnson2017clevr} uses generated scenes and executable
question programs to diagnose compositional visual reasoning.
GQA~\citep{hudson2019gqa} generates compositional questions from scene
graphs of real images. PhantomWiki~\citep{gong2025phantomwiki} generates
consistent document collections and questions with adjustable reasoning
difficulty and corpus size. VHOP follows this approach of making task
structure controllable, while placing the evidence across separate images.
Its verifier checks satisfying trajectories over the pooled corpus before
rendering. The guarantees are conditional on faithful rendering; the
additional retrieval bounds also require the solver restrictions and
sampling assumptions stated in Appendix~\ref{app:proofs}.
Separately, \citet{weller2026theoretical} establish limitations imposed by
embedding dimension on the relevance sets a single-vector retriever can
represent. Our analysis concerns missing intermediate evidence and
locally ambiguous branches, rather than embedding dimension.

\paragraph{Agents and learned search policies.}
ReAct~\citep{yao2022react} interleaves language reasoning with actions and
external observations. Search-R1~\citep{jin2025search} trains language
models to issue search queries using outcome-based reinforcement learning.
Q-RAG~\citep{sorokin2026q} trains embeddings for multi-step text
retrieval through a value-based objective. Our method also adapts retrieval
representations. It operates on image sequences and learns selection,
backtracking, and stopping from imitation and final-answer rewards.
The trained policy can also serve as a tool for an agent that revises
queries or selects an answer from returned images.

\paragraph{Imitation learning and outcome supervision.}
Collecting expert labels on states visited by the learner addresses the
distribution shift that motivates DAgger~\citep{ross2011reduction}.
Our Online~IL stage uses this idea but updates only on the current
iteration's states, without accumulating a dataset across iterations;
we therefore do not call the algorithm DAgger or invoke its guarantees.
The subsequent RLVR stage uses a terminal answer check, a leave-one-out
baseline~\citep{ahmadian2024back}, and a clipped policy
objective~\citep{schulman2017proximal,yu2026dapo}. We adapt these components to a visual retrieval policy with explicit
selection, backtracking, and stopping decisions.

\section{Benchmark Generation and Verification}
\label{app:gen}

\vhop{} separates task construction from image rendering. The generator
first specifies objects, their links across images, and the constraints in
each question. A symbolic verifier checks the resulting search problem over
the entire corpus. Rendering then turns this specification into images.
This separation provides exact task labels while allowing the visual
appearance and search difficulty to vary.

\subsection{Difficulty Levels and World Construction}

\begin{table}[ht]
\centering
\caption{\textbf{Cumulative difficulty levels.} Each level retains the
requirements of the preceding levels.}
\label{tab:ladder}
\small
\setlength{\tabcolsep}{4pt}
\renewcommand{\arraystretch}{1.1}
\begin{tabularx}{\linewidth}{@{}llX@{}}
\toprule
Level & Added requirement & Required behavior \\
\midrule
L1 & Object identity & Match the same physical object across views. \\
L2 & Spatial relations & Reject an otherwise matching scene with the wrong relation. \\
L3 & Color constraints & Check the color specified in the instruction. \\
L4 & Dead ends & Recover after a plausible branch fails to continue. \\
L5 & Appearance twins & Distinguish instances with similar type, color, and appearance. \\
L6 & Shared logos & Follow links defined by a shared logo. \\
L7 & Shared text & Follow links defined by a shared printed string. \\
\bottomrule
\end{tabularx}
\end{table}

\begin{figure}[ht]
\centering
\includegraphics[width=\textwidth]{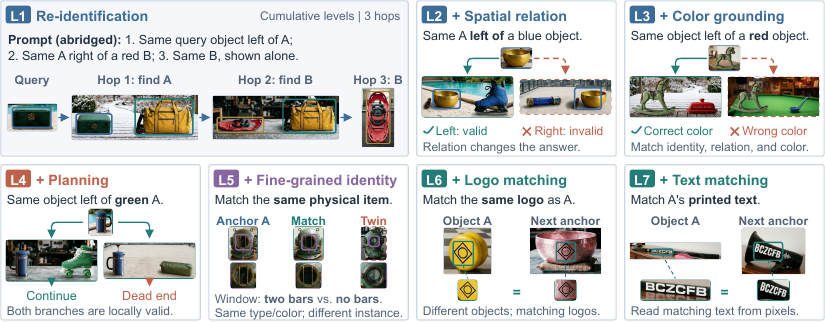}
\caption{\textbf{Visual examples of the difficulty levels.}
L1 shows a complete retrieval chain. L2 and L3 add spatial and color constraints;
L4 introduces dead ends; and L5 requires distinguishing look-alike objects.
L6 and L7 extend the matching rule to logos and printed text.
The examples come from separate questions. Boxes, arrows, and enlarged details
are reader annotations.}
\label{fig:vhop-levels-full}
\end{figure}

A \emph{world} is one image corpus and its associated questions. A
\emph{chain family} contains a gold chain and its related distractors.
The generator controls hop count, matching rules, relations, colors,
family count, and the numbers of distractors and forks. It assigns fresh
matching keys to each family and records every image containing each key.
The final portrait is included in the searchable corpus. Query images are
fresh views of the starting objects.

Distractors target different errors. Relation and color decoys violate a
stated constraint. A dead end satisfies the local match but lacks a valid
continuation. An appearance twin resembles the correct object while
representing a different instance. Extra portraits share query-specified
properties with the answer, preventing those properties alone from
identifying it. Images and filenames are shuffled without exposing their
roles. The corpus is available from the start; the task requires discovering
the links between images, rather than revealing hidden corpus entries.

\begin{algorithm}[ht]
\caption{Generating and verifying a \vhop{} world}
\label{alg:gen}
\small
\begin{algorithmic}[1]
\Require Difficulty level, hop count, family count, and object/mark pools
\For{each chain family}
    \State sample matching rules, spatial relations, and color constraints
    \State assign fresh keys and construct the gold scenes and final portrait
    \State add the level's decoys, dead ends, twins, and matching distractor portraits
    \State construct a precise query without naming the intermediate key values
\EndFor
\State pool the images and shuffle their identifiers
\For{each query}
    \State enumerate all satisfying trajectories over the pooled corpus
    \State check the unique precise path and the intended distractor structure
    \State enumerate acceptable endpoints for the vague variant, if supported
    \State reject and regenerate any family that fails verification
\EndFor
\State render the verified world and check its images against the specification
\State \Return image corpus, query images, instructions, and verified labels
\end{algorithmic}
\end{algorithm}

\subsection{Prompt Variants and Data Separation}

Precise instructions state the sequence of matching rules, relations, and
filters. They refer to an intermediate object through the preceding image,
without giving its identity, logo value, or printed string in advance.
Vague instructions omit a relation and may admit more than one answer.
The verifier enumerates the acceptable endpoints after that omission;
evaluation accepts any member of this set. Vague variants are reported for
L1--L5. One- and two-hop questions are verified suffixes of the three-hop
chains and use the same image corpora.

Training and evaluation use separate worlds and separate pools of object
appearances and marks. SFT and Online~IL use worlds with full trajectory
labels. RLVR uses a separate practice pool and obtains its learning signal
from the final-answer verifier. Development worlds support model selection;
the reported L4 test results use three held-out worlds. World identifiers
such as $w_0$ are local to their split, so a training $w_0$ and a development
$w_0$ do not denote the same corpus.

\subsection{Rendering Conditions}

Each rendering prompt specifies the objects, their colors and marks, and
their spatial arrangement. Reference images maintain an object's identity
or a mark across views. A portrait prompt requests the relevant object
alone. The question text is generated separately and does not reveal the
intermediate matching values.

The symbolic labels describe the intended scene. For these labels to remain
valid in the images, rendering must preserve object identity across views,
distinguish different instances where required, reproduce colors and marks,
and show the stated relations unambiguously. An image audit checks these
properties against the manifest; images with visible violations require
regeneration. These conditions are also the assumptions needed to transfer
the symbolic guarantees in Appendix~\ref{app:proofs} to pixels. The verifier
does not, by itself, certify rendering fidelity or rule out every possible
captioning shortcut.

\section{Symbolic Guarantees}
\label{app:proofs}

This section states what the generator guarantees about its symbolic worlds,
and the assumptions needed for the retrieval bounds. A unique symbolic answer
does not by itself guarantee that a rendered image is unambiguous. Applying
these statements to pixels also requires the rendering conditions in
Appendix~\ref{app:gen}.

\paragraph{Definitions.}
An object's \emph{link key} is its identity, logo, or printed string, depending
on the matching rule in the instruction. A scene contains an \emph{anchor}
that matches the preceding object and a \emph{witness} that supplies the next
key. The witness must satisfy the stated spatial relation and any color
constraint. A portrait contains the final object alone. Let $H$ be the total
number of retrieval hops, including the final portrait; the three-hop task
therefore has two scene hops and one portrait hop.

For a query $Q$ and pooled corpus $D$, let $\mathrm{Sat}(Q,D)$ be the set of
image sequences that satisfy every matching rule, relation, and filter, and
end in an eligible portrait. The acceptable answer set $\mathcal A(Q,D)$
contains the last image of each such sequence. Precise queries are verified
to have one satisfying sequence. For vague queries, the verifier enumerates
the sequences allowed by the omitted relation and accepts all their endpoints.

\begin{theorem}[Whole-corpus uniqueness]
\label{thm:unique}
Suppose each chain family has fresh link keys, every matched scene has a
unique anchor and witness, each non-forked gold prefix has one valid
continuation, and each alternative branch at a fork has no satisfying
completion. If the terminal key has exactly one eligible portrait, then a
precise query has exactly one satisfying trajectory in the pooled corpus.
\end{theorem}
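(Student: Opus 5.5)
The plan is an induction on the hop index along an arbitrary satisfying trajectory $(q_1,\ldots,q_H)\in\mathrm{Sat}(Q,D)$, showing that it must coincide with the gold trajectory $(g_1,\ldots,g_H)$. Existence is immediate, since the gold chain is constructed to satisfy every rule, relation and filter and ends in the eligible portrait. The substance is therefore uniqueness. The invariant to carry is: after $k$ hops, the current link key equals the gold key $\kappa_k$ of the family that owns $Q$, and $q_{1:k}=g_{1:k}$.

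\textbf{Base and confinement to the family.} The query image $q_0$ shows the starting object, whose key $\kappa_0$ is fresh to its family. Freshness means that any image in the pooled corpus $D$ containing an object with key $\kappa_0$ was generated inside the same chain family. Any other family's images carry disjoint keys and cannot contain a valid anchor. So $q_1$ lies in the family's image set. The same argument at every later hop keeps the whole trajectory inside one family, reducing the global statement over $D$ to a local one over that family. This is the only place the pooling enters.

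\textbf{Inductive step.} Suppose the current key is $\kappa_{k-1}$ and the prefix is gold.

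\emph{Non-fork position.} The hypothesis "one valid continuation" gives $q_k=g_k$. Because the matched scene has a unique anchor and witness, the witness, and hence the next key $\kappa_k$, is determined. There is no second object in the scene that also satisfies the relation and colour filter and could hand over a different key.

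\emph{Fork position.} Several images may satisfy the local match. Any $q_k\neq g_k$ starts an alternative branch. By hypothesis that branch has no satisfying completion, so no element of $\mathrm{Sat}(Q,D)$ passes through it, and again $q_k=g_k$.

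\textbf{Final hop.} At hop $H$ the key is $\kappa_{H-1}$ (the terminal key), and a satisfying sequence must end in an eligible portrait for it. Exactly one such portrait exists, so $q_H=g_H$ and $|\mathrm{Sat}(Q,D)|=1$.

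\textbf{Expected obstacle.} The delicate step is the confinement argument. "Fresh link keys" has to be read strongly: no image outside the family contains any object bearing a key of this family, including objects appearing as non-anchor or non-witness content in foreign scenes, and including decoy, dead-end and twin images. I would make this explicit as the precise meaning of freshness recorded by the generator. I would also note that L5 twins have distinct identity keys despite their similar appearance, so they are excluded at the symbolic level. With that reading, each hop's candidate set is exactly the family-internal set the other hypotheses control.

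A secondary subtlety is that "no satisfying completion" for an alternative branch must be interpreted over all of $D$, not just within the family. Confinement handles this too, because any completion would again have to stay inside the family.
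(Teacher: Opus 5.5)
Your proof is correct and takes essentially the same route as the paper: your hop-by-hop induction is the paper's ``first departure from the gold trajectory'' argument. It uses the same four exclusions: fresh keys block continuation into another family, continuation uniqueness rules out departure at non-forked prefixes, fork alternatives have no satisfying completion, and terminal-portrait uniqueness fixes the last image. Your explicit use of anchor/witness uniqueness to fix the next key, and your strong reading of freshness over the pooled corpus, spell out what the paper leaves to its remark that the verifier checks these conditions over the whole corpus.
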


\begin{proof}
The constructed gold trajectory satisfies the query. Consider a different
satisfying trajectory and its first departure from the gold trajectory.
Fresh keys prevent a continuation through another family. At a non-forked
prefix, continuation uniqueness rules out the departure. At a fork, the
alternative has no satisfying completion. A change in the final image is
excluded by terminal-portrait uniqueness. Thus no different satisfying
trajectory exists. The verifier checks these conditions over the pooled
corpus, including decoys and appearance twins.
\end{proof}

\begin{theorem}[Reusing appearance templates]
\label{thm:reuse}
Reusing an object's appearance template across families preserves symbolic
uniqueness if link keys remain fresh and the conditions of
Theorem~\ref{thm:unique} remain satisfied.
\end{theorem}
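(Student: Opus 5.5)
The plan is to reduce the statement directly to Theorem~\ref{thm:unique}. The central observation is that the symbolic notion of a satisfying trajectory, $\mathrm{Sat}(Q,D)$, never refers to appearance templates. It refers only to link keys (identity, logo, or printed string), spatial relations, color constraints, and portrait eligibility. An appearance template is rendering metadata. Reusing it changes how two objects look, not which symbolic key they carry or which relations they satisfy.

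First, I would make this precise by defining the symbolic world as the tuple of scenes. Each scene is a set of objects with keys, colors, and relations, plus the portraits and the query. Appearance templates then form a labeling that the matching rules never read. From this definition I would show that two worlds differing only in template assignment have the same set $\mathrm{Sat}(Q,D)$ for every query $Q$. The argument goes by induction over trajectory length, using the fact that each hop predicate (anchor matches previous key, witness satisfies relation and color) is a function of keys, colors, and relations alone.

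Second, I would check that the hypotheses of Theorem~\ref{thm:unique} are still met after reuse:
\begin{itemize}
\item \emph{Fresh link keys} is assumed explicitly.
\item \emph{Unique anchor and witness}, \emph{single valid continuation at non-forked prefixes}, \emph{no satisfying completion for alternative fork branches}, and \emph{unique terminal portrait} are, by hypothesis, ``remain satisfied''.
\end{itemize}
Since the verifier re-runs its enumeration over the pooled corpus after template assignment (Algorithm~\ref{alg:gen}), these conditions are certified on the actual world. Theorem~\ref{thm:unique} then yields exactly one satisfying trajectory.

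The main obstacle is the case where link key and appearance coincide, namely the L1/L5 identity rule, where the key is ``the same physical object.'' There one must argue that identity keys are instance identifiers distinct from templates. A reused template therefore produces an appearance twin, a different instance with a different key, rather than a shared key. I would state this distinction explicitly in the symbolic model. Under that model, cross-family continuations are excluded by key freshness exactly as in the proof of Theorem~\ref{thm:unique}. Any perceptual confusion this creates is a rendering-fidelity issue covered by Appendix~\ref{app:gen}, not a symbolic one.
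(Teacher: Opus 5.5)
Your proposal is correct and matches the paper's argument. The paper also observes that symbolic matching depends only on link keys and scene structure, so reusing a template adds no object to any key's matching set, and it treats distinguishability of identity-linked instances as a rendering requirement, just as your L1/L5 discussion does; your induction showing $\mathrm{Sat}(Q,D)$ is invariant and your hypothesis check against Theorem~\ref{thm:unique} simply spell out what the paper's three-line proof asserts.
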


\begin{proof}
Symbolic matching depends on link keys and scene structure, not on the
appearance template used to render an object. Reusing a template therefore
does not add an object to a key's matching set. For an identity link,
different instances must still be distinguishable in the rendered images.
\end{proof}

\paragraph{Assumptions for direct retrieval.}
A pairwise scorer assigns $f(Q,z)$ to each candidate image $z$ using only
the query and that image. It has no retrieved intermediate evidence,
corpus-level features, or metadata that reveals an image's role. Consider
$n$ candidate portraits that agree on all answer properties supplied by the
query, such as color and matching rule. The required symmetry assumption is
that, conditioned on the evidence available to this scorer, each of these
portraits is equally likely to be the answer. This requires role-independent
sampling of keys and appearances and symmetric handling of collisions.
Omitting a key from the query alone is insufficient to establish this symmetry.

\begin{theorem}[Direct pairwise answer bound]
\label{thm:pairwise}
Under the conditional symmetry above, if $n\ge m+1$ portraits share the
answer's query-specified properties and at least one scene hop precedes the
answer, any pairwise scorer has top-1 answer accuracy at most $1/(m+1)$,
averaged over the sampled worlds.
\end{theorem}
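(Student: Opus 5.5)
The argument is a symmetry, or exchangeability, bound. Fix a query $Q$ and let $P=\{z_1,\dots,z_n\}$, with $n\ge m+1$, be the candidate portraits in the pooled corpus that agree with the answer on every query-specified property. By construction the acceptable answer set $\mathcal A(Q,D)$ is a subset of $P$. For a precise query it is a single portrait, by Theorem~\ref{thm:unique}.

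A pairwise scorer's top-1 output is a (possibly randomized) function of the values $f(Q,z)$. Each value depends only on $Q$ and the image $z$. So the event that the top-1 image is the answer decomposes over which element of the corpus ranks first. Images outside $P$ can never be correct answers, so only candidates in $P$ can contribute to success. The key step is to condition on the evidence available to the scorer, which we call $\mathcal E$. This consists of $Q$ and the multiset of images with their scores, and nothing else: no intermediate retrieved evidence, no corpus-level features, and no role metadata. We then compute $\Pr[\hat z=\text{answer}\mid\mathcal E]$.

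Suppose the scorer selects $\hat z\in P$. Then
\[
\Pr[\hat z \text{ is the answer}\mid \mathcal E]=\Pr[z_j \text{ is the answer}\mid \mathcal E] \text{ for the chosen index } j.
\]
The conditional symmetry assumption says this probability equals $1/n$ for every $j$. The reason is that the answer's identity is determined by the scene hop preceding it, and the scorer never observes that hop, since at least one scene hop precedes the answer. Suppose instead the scorer selects an image outside $P$; then the success probability is $0$. In both cases the conditional success probability is at most $1/n\le 1/(m+1)$. Averaging over $\mathcal E$, that is, over the sampled worlds and the scorer's internal randomness, by the tower property gives top-1 accuracy at most $1/(m+1)$.

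The main obstacle is to make precise why the chosen index $j$ does not break the symmetry. Because $\hat z$ is itself a function of $\mathcal E$, we need the posterior over the answer's identity, given all of $\mathcal E$, to be uniform on $P$. Knowing that each portrait is marginally equally likely is not enough.

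I would handle this by formalizing the assumption as invariance of the joint law of (world, answer index) under permutations of the roles of $P$ that fix the scorer-visible data. Role-independent sampling of keys and appearances, together with symmetric collision handling, provides exactly this invariance. Omitting the key from the query does not, which matches the caveat in the paper. Under this invariance, a measure-preserving relabeling argument shows that $\Pr[\text{answer}=z_j\mid\mathcal E]$ is constant in $j$, and hence equal to $1/n$.

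It remains to check that the vague case does not weaken the bound. For a vague query, $|\mathcal A|$ may exceed one, but the theorem concerns the designated answer portrait among $n$ exchangeable look-alikes, so the same calculation applies with "the answer" read as the sampled target. The scene-hop hypothesis is needed precisely so that the answer's key is not recoverable from $Q$ alone. Without it, $f$ could match the query image directly and the symmetry would fail.
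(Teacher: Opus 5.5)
Your skeleton matches the paper's proof. The paper takes as its hypothesis that, conditional on the scorer's evidence, the answer is uniform over the $n$ look-alike portraits. It notes that a choice outside this set scores zero and averages to get $1/n\le 1/(m+1)$.

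The gap is in the step you single out as the crux. You take $\mathcal E$ to be $Q$ together with the entire multiset of corpus images and their scores. Under faithful rendering, $Q$ and the full corpus determine the answer, since Theorem~\ref{thm:unique} gives exactly one satisfying trajectory in the pooled corpus. So $\Pr[\text{answer}=z_j\mid\mathcal E]\in\{0,1\}$, and the symmetry you invoke cannot hold for this $\mathcal E$ when $n\ge 2$. No nontrivial permutation of the portraits' roles fixes all the images, so your relabeling argument yields nothing. Role-independent sampling does not supply the invariance you claim.

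Relatedly, a pairwise scorer \emph{does} observe the preceding scene hop, because it scores that image as a candidate. What it cannot do is carry that scene's key over to the portraits. Letting the top-1 output be an arbitrary function of the full score vector $(f(Q,z))_{z\in D}$ discards exactly this restriction. If $f$ is injective on images, such a rule can decode the corpus, follow the chain, and reach accuracy $1$.

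The repair is to state the symmetry at the portrait level and use the argmax structure of top-1 retrieval. Conditionally on $Q$ and the portraits $z_1,\dots,z_n$, the answer index $a$ is uniform. Role-independent sampling plus a preceding scene hop gives this, because the answer's key reaches $Q$ only through scene images.

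Let $T$ be the set of maximizers of $f(Q,\cdot)$ over $P$. If the top-1 image (the argmax of $f(Q,\cdot)$ over $D$, with symmetric tie-breaking) is $z_a$, then $z_a\in T$. Hence the success probability is at most $\mathbb{E}\bigl[\mathbf{1}[a\in T]/|T|\bigr]=1/n$, because $T$ depends only on $Q$ and the portraits. Scene-image scores may be correlated with $a$, but they only decide whether some portrait wins overall, not which portrait.
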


\begin{proof}
Conditioned on the scorer's available evidence, the answer index is uniform
over the $n$ portraits. Any selected index has probability $1/n$ of being
correct. Selecting an image outside this set cannot increase success.
Averaging over the evidence gives the bound. A solver that retrieves an
intermediate image and reads its next key has additional evidence, so this
bound does not apply to it.
\end{proof}

\begin{proposition}[Completing a chain with a tight retrieval budget]
\label{prop:budget}
Consider a policy that follows the query's local transitions and must
retrieve a complete $H$-hop chain using at most $H$ image retrievals. At each
fork $t$, suppose there are $d_t$ incorrect branches and, conditioned on a
correct prefix and the policy's available evidence, the correct branch is
uniform among the $1+d_t$ choices. If an incorrect branch cannot complete
the query, the probability of completing the chain is at most
$\prod_{t:\,d_t>0}(1+d_t)^{-1}$.
\end{proposition}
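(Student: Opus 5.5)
The plan is to exploit the budget constraint: with exactly $H$ retrievals for an $H$-hop chain, no retrieval can be spent on exploration. So the policy must pick the correct branch on its first attempt at every fork. I would then bound the probability of this event by chaining conditional probabilities along the hops.

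First, I would formalize the event of success. Let $E_t$ denote the event that the $t$-th retrieval returns the gold image $g_t$. I would argue that completing the chain within $H$ retrievals requires $E_1\cap\cdots\cap E_H$. The key observation is that a chain consists of $H$ images. If any retrieval returns a non-gold image at a fork, that branch cannot complete the query by hypothesis. In Theorem~\ref{thm:unique} terms, there is no satisfying completion through it. The policy must then either fail or spend at least one further retrieval replacing it. That would need more than $H$ retrievals in total, since each of the $H$ positions still needs its own retrieval. At non-forked positions ($d_t=0$) I would simply bound the conditional probability of $E_t$ by $1$.

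Second, I would write the success probability as
\[
\Pr\Bigl[\textstyle\bigcap_{t=1}^{H} E_t\Bigr]=\prod_{t=1}^{H}\Pr\bigl[E_t \mid E_1,\ldots,E_{t-1}\bigr].
\]
At a fork $t$ with $d_t>0$, I would condition further on the policy's available evidence $\mathcal F_t$ together with the correct prefix. By assumption, the correct branch is uniform among the $1+d_t$ locally valid choices given $\mathcal F_t$. The argument then mirrors the proof of Theorem~\ref{thm:pairwise}. Whatever (possibly randomized) choice the policy makes from $\mathcal F_t$, it hits the correct branch with probability exactly $1/(1+d_t)$ if it picks among the valid choices. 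Selecting a locally invalid image violates the query's local transition, which the policy is assumed to follow, and can only lower success. Averaging over $\mathcal F_t$ gives $\Pr[E_t\mid E_{<t}]\le (1+d_t)^{-1}$. Multiplying over the forks yields the stated product.

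The main obstacle is the first step: making rigorous that a wrong pick at a fork cannot be ``repaired'' within the budget. This depends on how retrievals are counted. Backtrack and Stop are not image retrievals, but backtracking after a wrong pick forces a replacement retrieval, which exceeds $H$. I would handle this by a counting argument: the final stack must contain $H$ distinct gold images, each obtained by a separate \textsc{Select}, so any extra \textsc{Select} of a non-gold image exceeds the budget. A secondary subtlety is that the uniformity assumption must hold conditionally on earlier correct choices, including the evidence gathered from them. The hypothesis is stated exactly in this conditional form, so the chain rule applies directly.
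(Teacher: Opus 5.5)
Your proposal is correct and follows the same route as the paper's proof. The tight budget rules out spending a retrieval on a wrong branch, conditional uniformity bounds each fork's success probability by $(1+d_t)^{-1}$, and the chain rule over successive forks gives the product without needing independence; your counting argument (the final depth of $H$ with at most $H$ \textsc{Select}s forces zero \textsc{Backtrack}s and a gold image at every position) makes the paper's one-line budget remark explicit without changing the argument.
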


\begin{proof}
The budget leaves no retrievals for testing a wrong branch and returning to
the correct one. The conditional probability of choosing correctly at fork
$t$ is at most $(1+d_t)^{-1}$. Applying the chain rule over successive forks
gives the product. Unconditional independence between forks is not required.
\end{proof}

This proposition concerns complete-chain retrieval under the stated local
information restriction. It does not bound finding the answer anywhere in
a returned set, guessing the final portrait directly, or using a larger
budget for recovery. Its budget counts image retrievals; the experimental
decision budget also counts \textsc{Backtrack} and \textsc{Stop}. The
benefit of additional search is measured empirically.

\section{Training and Evaluation Details}
\label{app:train}

\subsection{Data, Models, and Evaluation}

All four training configurations in Table~\ref{tab:main} use L4 data and
Qwen3-VL-Embedding-2B~\citep{li2026qwen3}. They vary the prompt type (precise or vague) and
training hop counts (three hops only or a mixture of one, two, and three).
The fully labeled seed world contains $978$ questions and $4{,}492$ corpus
images. SFT and Online~IL use its gold trajectories. RLVR uses separate
practice worlds and a final-answer verifier; it does not train on their
intermediate gold actions. The detailed optimization settings below describe
the precise three-hop reference run, which produces
\texttt{rloo10w10\_200}.

For the three-hop results, the precise-prompt sample sizes at L1--L7 are
$(99,98,97,300,100,94,95)$; the vague-prompt sizes at L1--L5 are
$(99,49,48,150,50)$. L4 pools three held-out test worlds, while the other
levels use development world $w_0$. Development scores are used for model
selection and should not be read as independently held-out estimates.
Embedding and caption baselines measure whether an acceptable answer is
among their five returned images. For a standalone policy, correctness requires \textsc{Stop}
with an acceptable image at the top of the active stack. Agent accuracy
checks the returned answer, including the evaluator's fallback if there is
no valid final declaration. Finding an answer among retrieved images is a
separate metric, defined for the human-authored evaluation in
Appendix~\ref{app:human-authored}.

The active stack $\sigma_t$ contains the query image and the retrieved
images that have not been removed by backtracking. Its encoding is
$s_t=E_s(x,\sigma_t)$. The rollout $\rho$ records every decision, including
selections later undone. Candidate masks and stored parent scores are
maintained by the search procedure. Thus a popped image leaves subsequent
state-encoder inputs but its earlier decision and training record remain
in the rollout. This distinction applies in both Online~IL and RLVR.

\subsection{Supervised Fine-Tuning}

\begin{wrapfigure}{r}{0.3\linewidth}
    \centering
    \includegraphics[width=\linewidth]{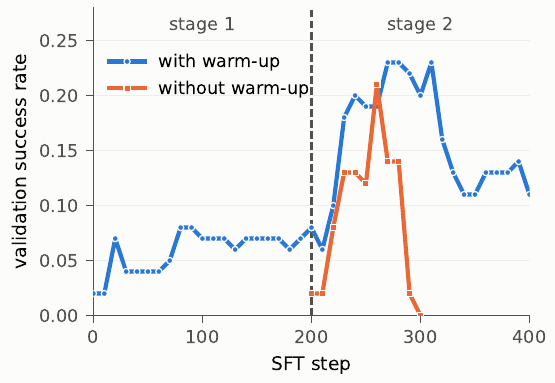}
    \caption{\textbf{State-encoder warm-up stabilizes SFT.}}
    \label{fig:sft}
\end{wrapfigure}

The state and action encoders use separate LoRA adapters over a frozen
backbone, with rank $32$ and scaling parameter $32$. The adapters modify
the query, key, and value projections and the MLP projections. Embeddings
use last-token pooling and are normalized to unit length, with dimension
$2048$. The reference SFT recipe uses learning rate $2\times10^{-5}$,
batch size $8$ per GPU, and a linear schedule over four epochs.

The first stage trains only the state encoder on gold prefixes with
full-corpus InfoNCE at temperature $0.05$. The next stage introduces
hop-level hard negatives, including designed dead ends and constraint
violations, and also updates the action encoder. Hard-negative images are
re-encoded with gradients and their scores replace the corresponding
entries in the cached corpus logits. These re-encoded images provide the
action encoder's gradient; the rest of the cached index is treated as
constant during an update. Later stages freeze the action encoder and its
corpus index.

\subsection{Online Imitation Learning}
\label{app:online-il}

\paragraph{Rollout collection.}
Online~IL runs for $120$ iterations with a $14$-decision budget per rollout.
The oracle selects
the next gold image from a valid prefix, backtracks from an incorrect prefix,
and stops at the completed chain. At iteration $i$, the expert-control and
forced-excursion probabilities are
\begin{equation}
    \beta_i=\max\!\left(0,1-\frac{i}{60}\right),\qquad
    p_i=0.6\max\!\left(0,1-\frac{i}{80}\right).
    \label{eq:online_il_schedules}
\end{equation}
At a valid prefix with an available dead end, the rollout enters a dead end
with probability $p_i$. Otherwise, it follows the oracle with probability
$\beta_i$ and the current policy with probability $1-\beta_i$. Every visited
state receives an oracle label, regardless of how its action was chosen.
Each iteration trains only on the states collected in that iteration;
states from earlier iterations are not retained for training.
Each record stores the stack, candidate mask, parent maximum score, and
oracle action. During the update, we re-encode the recorded stack and
recompute the current retrieval scores and $c_t$, retaining the recorded
mask and parent score. A later backtrack does not remove earlier records.

\paragraph{Online~IL losses and gradients.}
Selection uses the top-$8$ eligible candidates, with the expert's next gold
image appended if absent, and a softmax temperature of $0.05$. The gate
losses in Eq.~\ref{eq:online_il_losses} are
\begin{align}
    B_t(y)&=-w_+y\log p_t^{\mathrm{bt}}
             -(1-y)\log(1-p_t^{\mathrm{bt}}),\nonumber\\
    S_t(y)&=-y\log p_t^{\mathrm{stop}}
             -(1-y)\log(1-p_t^{\mathrm{stop}}),
    \label{eq:online_il_gate_losses}
\end{align}
where $w_+=4$ compensates for the relative scarcity of backtrack targets.
The gate probabilities use Eq.~(\ref{eq:router_gates}) with $\tau_g=1$.
We set $B_t(y)=0$ at the root and $S_t(y)=0$ wherever stopping is disabled.
For the procedural evaluations, three-hop-only checkpoints permit stopping
once the stack depth reaches the question's hop count; mixed-hop checkpoints
permit it after any retrieval. This depth floor is an evaluation constraint
and should be kept distinct from the learned stop decision.
The per-state losses are averaged over the collected dataset:
\begin{equation}
    \mathcal L_{\mathrm{Online\,IL}}
    =\frac{1}{|\mathcal D_i|}\sum_{t=1}^{|\mathcal D_i|}\ell_t.
    \label{eq:online_il_loss}
\end{equation}
The action encoder and its corpus index remain frozen. Selection gradients
reach the state encoder through the retrieval scores, while gate gradients
reach both the corresponding MLP and the state encoder through $s_t$.
The state embedding is not detached on backtrack or stop examples. The four
retrieval-context features are treated as constants: the current maximum
score, the top-1/top-2 gap, the parent maximum, and their difference.

\subsection{Outcome-Based RL Details}
\label{app:rl}

\begin{figure}[H]
\centering
\begin{subfigure}[t]{0.32\textwidth}
    \centering
    \includegraphics[width=\linewidth]{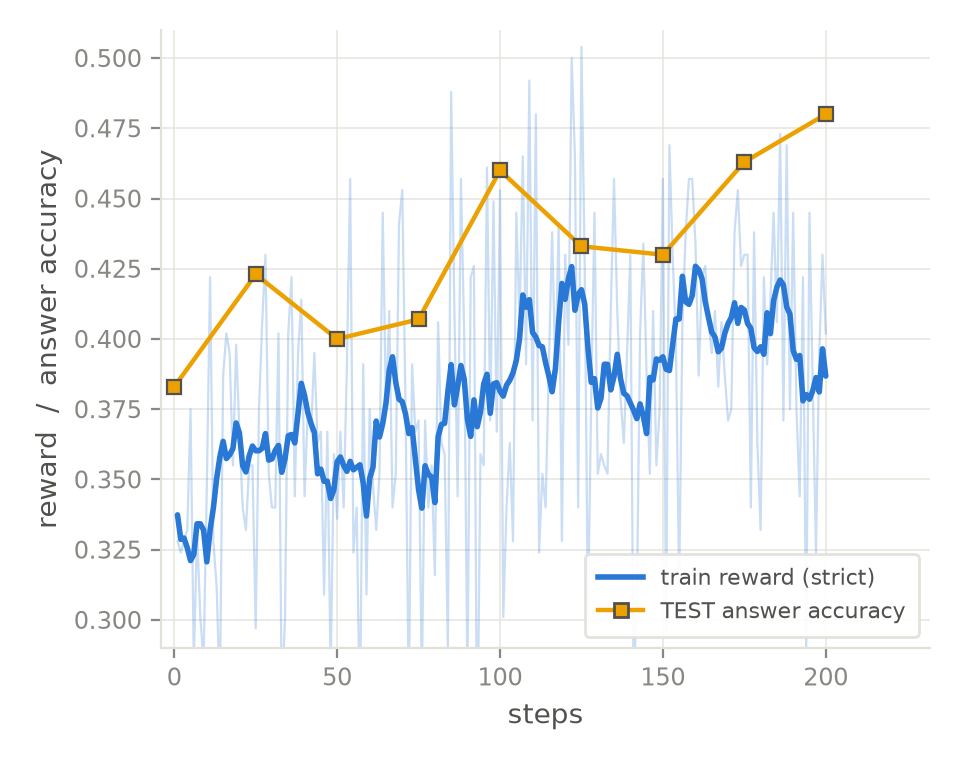}
    \caption{Reward and test accuracy}
    \label{fig:panel_a}
\end{subfigure}\hfill
\begin{subfigure}[t]{0.32\textwidth}
    \centering
    \includegraphics[width=\linewidth]{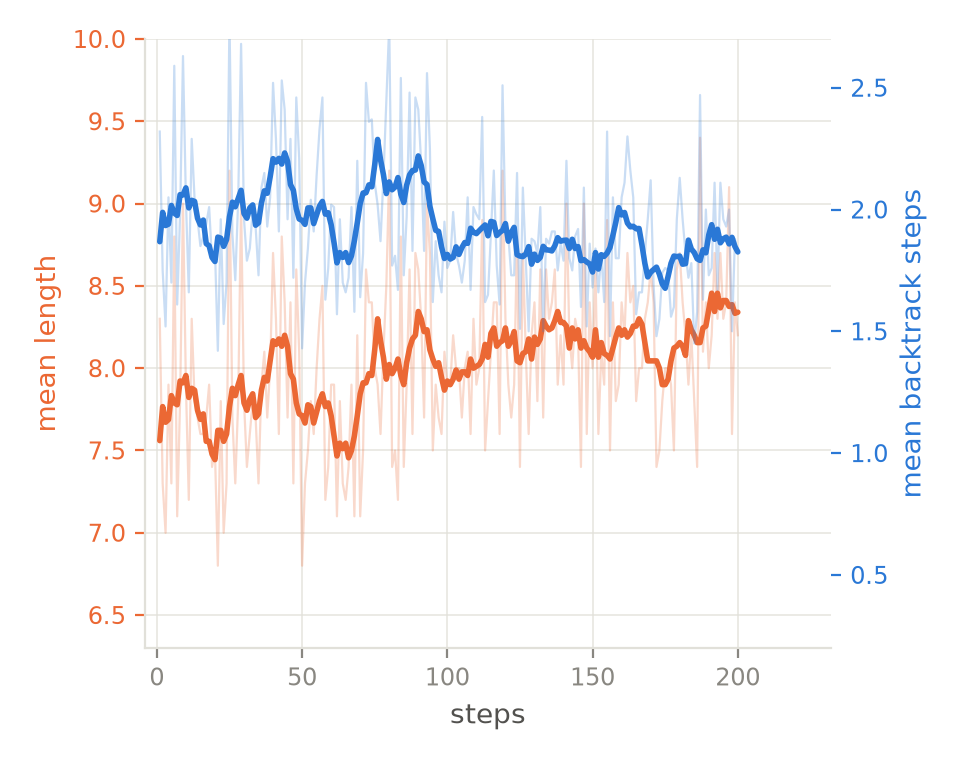}
    \caption{Length of $\rho$ and backtracks}
    \label{fig:panel_b}
\end{subfigure}\hfill
\begin{subfigure}[t]{0.32\textwidth}
    \centering
    \includegraphics[width=\linewidth]{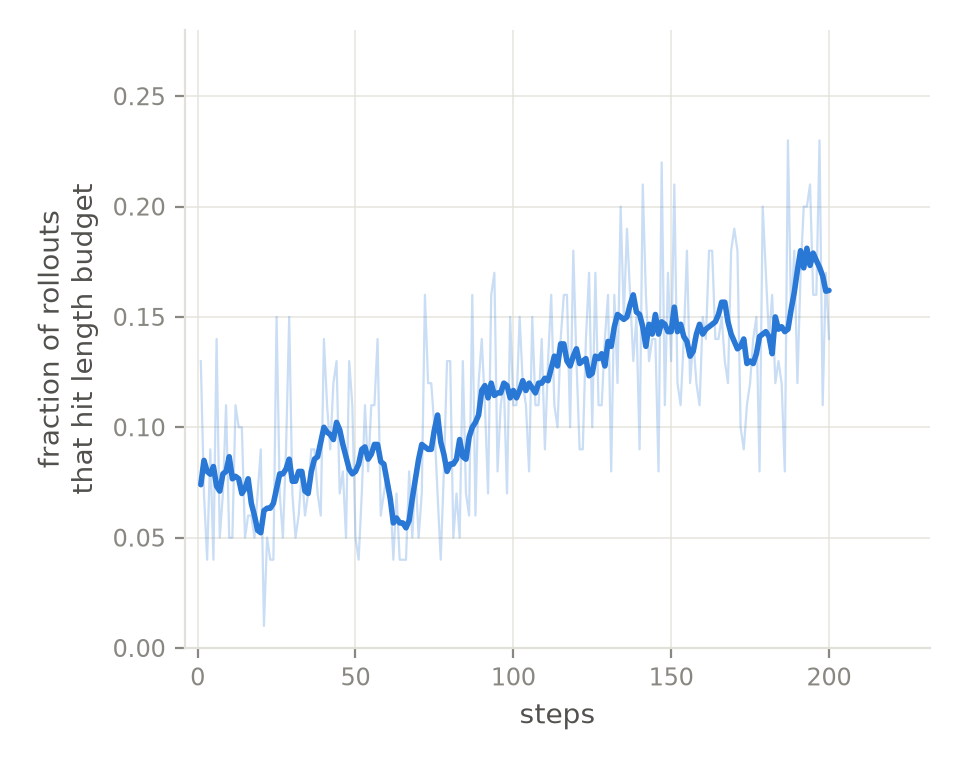}
    \caption{Truncation rate}
    \label{fig:panel_c}
\end{subfigure}
\caption{\textbf{Training dynamics of RLVR.}
(a) Training reward and test accuracy.
(b) Rollout length and number of backtracks. (c) The fraction of rollouts that exceed budget.}
\label{fig:three_panels}
\end{figure}

\paragraph{Rewards and advantages.}
For each input $(x,q_0)$, we sample $G$ rollouts $\rho_g$ from the behavior
policy $\mu$, a snapshot of the current policy at the start of the iteration.
Let $\mathcal A$ be the input's certified answer set: a singleton for a
precise question, or the set of valid answers for a vague question. With
$T_g$ denoting the final decision index, the terminal verifier returns
\begin{equation}
    r_g=\mathbf 1\!\left[
        a_{g,T_g}=\textsc{Stop}\ \land\
        \operatorname{top}(\sigma_{g,T_g})\in\mathcal A
    \right].
    \label{eq:terminal_reward}
\end{equation}
Each decision in $\rho_g$ receives the same leave-one-out advantage
\begin{equation}
    A_g=r_g-\frac{1}{G-1}\sum_{g'\ne g}r_{g'},
    \label{eq:rloo_advantage}
\end{equation}
where the sum is over the other rollouts for the same input.
Groups with identical rewards have zero advantage and contribute no
policy-gradient signal.

\begin{algorithm}[ht]
\caption{RLVR training}
\label{alg:rl}
\small
\begin{algorithmic}[1]
\Require Online~IL policy $\pi_\theta$; terminal answer verifier; frozen action encoder
\State freeze a reference copy $\pi_0$ of the initial policy
\For{each iteration}
    \State snapshot the current policy as the behavior policy $\mu$
    \For{each sampled query}
        \State generate $G$ complete rollouts $\rho_g$ using $\mu$
        \State for every prefix $\rho_{g,<t}$, record its active stack $\sigma_{g,t}$,
        masks, context, and behavior probabilities
        \State compute terminal rewards $r_g$ and leave-one-out advantages $A_g$
    \EndFor
    \For{each inner update epoch}
        \State re-encode retained stacks as $s_{g,t}=E_s(x,\sigma_{g,t})$;
        recompute candidate scores and gate probabilities
        \State update $E_s$ and the gates with clipped branch terms and KL penalties
    \EndFor
\EndFor
\end{algorithmic}
\end{algorithm}

\begin{table}[H]
\centering
\caption{\textbf{Basic RL policy-gradient form.}}
\label{tab:rl_losses}
\small
\setlength{\tabcolsep}{4pt}
\renewcommand{\arraystretch}{1.25}
\begin{tabularx}{0.65\linewidth}{@{}l>{\raggedright\arraybackslash}X@{}}
\toprule
Sampled action $a_t$ & Basic term $\ell_t^{\mathrm{PG}}$ \\
\midrule
\textsc{Select}$(a)$ &
$-A_g\log\pi_\theta^{\mathrm{sel}}(a\mid s_t,\mathcal C_t)$ \\
\textsc{Backtrack} & $-A_g\log p_t^{\mathrm{bt}}$ \\
\textsc{Stop} & $-A_g\log p_t^{\mathrm{stop}}$ \\
\bottomrule
\end{tabularx}
\end{table}

\paragraph{Branch-local credit assignment.}
For decision $t$ in rollout $\rho_g$, let $p_{g,t}$ denote the sampled-branch
probability factor in Table~\ref{tab:rl_losses}, and $\bar p_{g,t}$ its stored
value under $\mu$. The SELECT and gate probabilities follow
Eqs.~(\ref{eq:select_policy}) and~(\ref{eq:router_gates}). We omit the
complementary gate factors in Eq.~(\ref{eq:router_policy}):
$(1-p_t^{\mathrm{stop}})(1-p_t^{\mathrm{bt}})$ for SELECT and
$(1-p_t^{\mathrm{stop}})$ for BACKTRACK. This generally yields a biased
surrogate for the full policy gradient obtained from
Eq.~(\ref{eq:router_traj}). This choice prevents a SELECT policy-loss
term from directly changing the gates through their rejection probabilities.

The branch terms share one objective and optimizer. SELECT gradients reach
$E_s$; BACKTRACK and STOP gradients reach $E_s$ and the respective gate MLP.
The policy loss for SELECT has no direct gate-MLP gradient. Shared-encoder
updates and KL penalties can still change other branch probabilities;
$E_a$ remains fixed.

\paragraph{DAPO-style clipping and negative-advantage weighting.}
For batch reuse, we use a PPO-style ratio surrogate~\citep{schulman2017proximal}
with DAPO's asymmetric clipping bounds~\citep{yu2026dapo}:
\begin{align}
    \omega_{g,t}(p)&=\frac{p}{\bar p_{g,t}},
    \label{eq:branch_ratio}\\
    \mathcal J_{g,t}(p)&=-w(A_g)\min\!\left\{
        \omega_{g,t}(p)A_g,\,
        \operatorname{clip}\!\left(\omega_{g,t}(p),
        1-\epsilon_{\mathrm{lo}},1+\epsilon_{\mathrm{hi}}\right)A_g
    \right\},
    \label{eq:rl_clipped_loss}
\end{align}
with
\begin{equation}
    w(A)=
    \begin{cases}
        1, & A\ge0,\\
        \lambda_{\mathrm{neg}}, & A<0.
    \end{cases}
    \label{eq:advantage_weight}
\end{equation}
Setting $\epsilon_{\mathrm{hi}}>\epsilon_{\mathrm{lo}}$ permits larger
probability increases for positive-advantage decisions before clipping.
The additional weight $0<\lambda_{\mathrm{neg}}<1$ reduces the magnitude of
negative-advantage terms without reversing their sign. At $p=\bar p_{g,t}$,
with $w=1$ and fixed probability support, the surrogate's gradient matches
the $-A_g\log p$ form in Table~\ref{tab:rl_losses}.

The implemented policy objective sums the clipped terms:
\begin{equation}
    \mathcal L_{\mathrm{PG}}
    =\frac{1}{N_{\mathrm{roll}}}
      \sum_{\rho_g\in\mathcal B}\sum_{t=0}^{T_g}
      \mathcal J_{g,t}(p_{g,t}).
    \label{eq:rloo_loss}
\end{equation}
Here, $\mathcal B$ contains the nonzero-advantage rollouts, while
$N_{\mathrm{roll}}$ counts all sampled rollouts, including zero-advantage
groups. States excluded by the training filters below contribute zero
policy loss. During batch reuse, a SELECT term is evaluated only while the
sampled image remains in the current top-$K$ set $\mathcal C_t$.

\paragraph{RL regularization.}
The full objective adds behavior and reference KL penalties:
\begin{equation}
    \mathcal L_{\mathrm{RL}}=\mathcal L_{\mathrm{PG}}
        +\beta_{\mathrm{beh}}\mathcal K_{\mathrm{beh}}
        +\beta_{\mathrm{ref}}(i)\mathcal K_{\mathrm{ref}}.
    \label{eq:total_rl_loss}
\end{equation}
The behavior policy $\mu$ is the policy that generated the current rollout
batch. We record its selection probabilities before updating the model.
The behavior penalty sums
$\mathrm{KL}(\pi_\theta^{\mathrm{sel}}\,\|\,\mu^{\mathrm{sel}})$
over retained SELECT states and divides by $N_{\mathrm{roll}}$.
Both distributions use the candidate set stored during rollout.
The stored probabilities remain fixed through all updates on this batch;
they are refreshed only when the updated policy collects the next batch.
Thus, behavior KL limits changes from the policy at the start of the
current iteration, even when the batch is reused for several gradient updates.
The reference policy $\pi_0$ is a frozen copy of the initial Online~IL policy.
On a subsample of retained states, $\mathcal K_{\mathrm{ref}}$ adds the
SELECT KL on the stored candidate set when selection was taken, and
Bernoulli KLs for each permitted gate. All KLs use the current policy as
their first argument. The reference sum is normalized by the number of
reference states, floored at $16$. Its weight decreases during training
to stabilize early updates while allowing later improvement without
gold-trajectory supervision.

\paragraph{RL configuration.}
The precise three-hop reference run uses eight data-parallel GPUs. At each iteration, each rank samples
$4$ practice questions and $G=8$ stochastic rollouts per question, each
limited to $16$ decisions. The selection temperature is $\tau=0.05$ and
the gate temperature is $\tau_g=0.5$. Each group stays on one rank, so
leave-one-out baselines require no communication. The state-encoder LoRA
learning rate is $5\times10^{-6}$ and the gate-MLP rate is $3\times10^{-4}$.
The action encoder and corpus index remain frozen throughout.
We reuse each batch for two inner epochs with DAPO-style asymmetric clipping
($\epsilon_{\mathrm{lo}}=0.2$, $\epsilon_{\mathrm{hi}}=0.3$) and
$\lambda_{\mathrm{neg}}=0.5$.

Gradient-bearing states are restricted to $|\sigma_t|\le9$, including the
query image (equivalently, chain depth $d_t\le8$), to bound activation
memory. If more than $128$ eligible states remain per rank, we sample
$128$ uniformly and scale the policy loss by the original count divided
by $128$. The behavior-KL weight is $\beta_{\mathrm{beh}}=0.05$.
Reference KL is evaluated on at most $48$ retained states per rank, with
$\beta_{\mathrm{ref}}(i)=0.3$ for iterations $1$--$75$ and $0.1$ thereafter.
The reference run lasts $200$ iterations ($\approx14$ hours) and yields
checkpoint \texttt{rloo10w10\_200}.

\paragraph{Length-penalty variant.}
This variant keeps the terminal verifier unchanged but substitutes
$\tilde r_g=r_g-\lambda n_{\mathrm{decisions}}(\rho_g)$
for $r_g$ in Eq.~(\ref{eq:rloo_advantage}).
The decision count includes SELECT, BACKTRACK, and STOP; strict reward
remains the evaluation and checkpoint-selection criterion.
The reference run uses $\lambda=0$.
The recorded $\lambda=0.02$ run uses one-state gradient microbatches and
raises the gradient-bearing stack limit to $12$ images to accommodate
the additional failed-rollout states with nonzero advantages. This
implementation difference should be considered alongside the reward change.
Appendix~\ref{sec:appendix:lenpen} examines the effect of charging for each decision.

\paragraph{Roles of the optimization components.}
The leave-one-out baseline compares outcomes for the same query without
normalizing by a small within-group reward standard deviation. Asymmetric
clipping limits changes during batch reuse, while the negative-advantage
weight reduces the contribution of failed rollouts. Behavior KL constrains
updates relative to the policy that collected the batch. Reference KL
limits drift from the initial Online~IL policy.

\section{Additional Results and Ablations}
\label{app:results}

\subsection{One- and Two-Hop Queries}
\label{app:full-ladder}

Tables~\ref{tab:main-h1} and~\ref{tab:main-h2} report the complete results
on verified suffixes of the three-hop queries. The corpus and checkpoints
are unchanged. The three-hop-only policies retain the question-specific
minimum stop depth; mixed-hop policies can stop after any retrieval.
As in the main table, embedding baselines receive credit if the answer is
among five results, whereas the standalone policy must stop on one correct
answer. This distinction is especially relevant on one-hop queries, where
a top-5 list contains five possible answers.

\begin{table}[H]
\centering
\caption{\textbf{One-hop success rate (\%).}
Same corpora, checkpoints, and scoring conventions as Table~\ref{tab:main}.
L4 uses test worlds $w_1$--$w_3$; other levels use development world $w_0$.
Caption$\to$Qwen3 retains the query image as pixels. In block~(b), the
agent uses the learned per-step retriever; the other trained-agent blocks
use the chain tool. Vague prompts are evaluated on L1--L5.
Bold marks the best result within each training block on each side of the
dashed rule.}
\label{tab:main-h1}
\scriptsize
\setlength{\tabcolsep}{1.1pt}
\renewcommand{\arraystretch}{1.1}
\resizebox{\textwidth}{!}{%
\begin{tabular}{@{}lr *{12}{c} c !{\dvrule} cc c@{}}
\toprule
& & \multicolumn{13}{c}{Retrieval methods} & \multicolumn{3}{!{\dvrule}c}{Agentic search} \\
\cmidrule(lr){3-15}\cmidrule(lr){16-18}
& & \multicolumn{6}{c}{Embedding retrieval} & \multicolumn{6}{c}{Caption $\rightarrow$ embedding}
& \router & GE2 & Qwen3 & $+$\,\router \\
\cmidrule(lr){3-8}\cmidrule(lr){9-14}
& & \multicolumn{3}{c}{GE2} & \multicolumn{3}{c}{Qwen3} & \multicolumn{3}{c}{GE2} & \multicolumn{3}{c}{Qwen3}
& & & & \\
\cmidrule(lr){3-5}\cmidrule(lr){6-8}\cmidrule(lr){9-11}\cmidrule(lr){12-14}
Level & $n$ & 1-shot & Greedy & Hist. & 1-shot & Greedy & Hist.
& 1-shot & Greedy & Hist. & 1-shot & Greedy & Hist. & & & & \\
\midrule
\multicolumn{18}{@{}l}{\textbf{(a) Precise prompt, $3$-hop training}}\\
\midrule
L1 (re-ID) & 99 & 90.9 & 77.8 & 78.8 & \textbf{98.0} & 92.9 & 96.0 & 27.3 & 16.2 & 18.2 & 92.9 & 73.7 & 73.7 & 69.7 & 80.8 & 74.7 & \textbf{96.0} \\
L2 (spatial) & 98 & 92.9 & 77.6 & 81.6 & \textbf{93.9} & 89.8 & 90.8 & 27.6 & 10.2 & 14.3 & 86.7 & 64.3 & 70.4 & 67.3 & 81.6 & 78.6 & \textbf{95.9} \\
L3 (color) & 97 & 93.8 & 79.4 & 83.5 & \textbf{96.9} & 89.7 & 93.8 & 14.4 & 4.1 & 7.2 & 84.5 & 66.0 & 70.1 & 74.2 & 85.6 & 82.5 & \textbf{95.9} \\
L4 (dead ends) & 300 & 90.7 & 70.0 & 73.7 & \textbf{97.3} & 88.0 & 93.7 & 14.7 & 8.0 & 9.7 & 86.7 & 64.0 & 65.3 & 67.3 & 77.7 & 75.7 & \textbf{96.7} \\
L5 (twins) & 100 & 89.0 & 67.0 & 76.0 & \textbf{98.0} & 88.0 & 95.0 & 14.0 & 6.0 & 8.0 & 86.0 & 66.0 & 73.0 & 59.0 & 78.0 & 66.0 & \textbf{94.0} \\
L6 (logo) & 94 & \textbf{57.4} & 42.6 & 47.9 & 55.3 & 47.9 & 50.0 & 6.4 & 5.3 & 4.3 & 56.4 & 40.4 & 43.6 & 28.7 & 58.5 & 62.8 & \textbf{80.9} \\
L7 (text) & 95 & \textbf{49.5} & 37.9 & 36.8 & 41.1 & 31.6 & 33.7 & 5.3 & 1.1 & 2.1 & \textbf{49.5} & 27.4 & 29.5 & 12.6 & 73.7 & 75.8 & \textbf{80.0} \\
\midrule
\multicolumn{18}{@{}l}{\textbf{(b) Precise prompt, mixed-hop training}}\\
\midrule
L1 (re-ID) & 99 & 90.9 & 77.8 & 78.8 & \textbf{98.0} & 92.9 & 96.0 & 27.3 & 16.2 & 18.2 & 92.9 & 73.7 & 73.7 & \textbf{98.0} & \textbf{80.8} & 74.7 & 69.7 \\
L2 (spatial) & 98 & 92.9 & 77.6 & 81.6 & 93.9 & 89.8 & 90.8 & 27.6 & 10.2 & 14.3 & 86.7 & 64.3 & 70.4 & \textbf{96.9} & \textbf{81.6} & 78.6 & 68.4 \\
L3 (color) & 97 & 93.8 & 79.4 & 83.5 & 96.9 & 89.7 & 93.8 & 14.4 & 4.1 & 7.2 & 84.5 & 66.0 & 70.1 & \textbf{97.9} & \textbf{85.6} & 82.5 & 78.4 \\
L4 (dead ends) & 300 & 90.7 & 70.0 & 73.7 & 97.3 & 88.0 & 93.7 & 14.7 & 8.0 & 9.7 & 86.7 & 64.0 & 65.3 & \textbf{99.7} & \textbf{77.7} & 75.7 & 76.7 \\
L5 (twins) & 100 & 89.0 & 67.0 & 76.0 & \textbf{98.0} & 88.0 & 95.0 & 14.0 & 6.0 & 8.0 & 86.0 & 66.0 & 73.0 & 96.0 & \textbf{78.0} & 66.0 & 74.0 \\
L6 (logo) & 94 & \textbf{57.4} & 42.6 & 47.9 & 55.3 & 47.9 & 50.0 & 6.4 & 5.3 & 4.3 & 56.4 & 40.4 & 43.6 & 50.0 & 58.5 & \textbf{62.8} & 59.6 \\
L7 (text) & 95 & \textbf{49.5} & 37.9 & 36.8 & 41.1 & 31.6 & 33.7 & 5.3 & 1.1 & 2.1 & \textbf{49.5} & 27.4 & 29.5 & 27.4 & 73.7 & \textbf{75.8} & 74.7 \\
\midrule
\multicolumn{18}{@{}l}{\textbf{(c) Vague prompt, $3$-hop training}}\\
\midrule
L1 (re-ID) & 99 & 46.5 & 21.2 & 27.3 & \textbf{73.7} & 45.5 & 55.6 & 24.2 & 9.1 & 10.1 & 24.2 & 13.1 & 15.2 & 47.5 & 15.2 & 14.1 & \textbf{79.8} \\
L2 (spatial) & 98 & 56.1 & 30.6 & 40.8 & \textbf{76.5} & 59.2 & 64.3 & 17.3 & 9.2 & 10.2 & 25.5 & 13.3 & 13.3 & 45.9 & 32.7 & 20.4 & \textbf{90.8} \\
L3 (color) & 96 & 55.2 & 28.1 & 35.4 & \textbf{65.6} & 47.9 & 55.2 & 11.5 & 7.3 & 7.3 & 22.9 & 10.4 & 12.5 & 33.3 & 19.8 & 15.6 & \textbf{79.2} \\
L4 (dead ends) & 300 & 47.0 & 19.3 & 26.3 & \textbf{66.3} & 41.3 & 47.7 & 13.3 & 6.7 & 7.7 & 18.0 & 7.7 & 9.3 & 26.3 & 21.0 & 21.3 & \textbf{77.0} \\
L5 (twins) & 100 & 34.0 & 14.0 & 19.0 & \textbf{58.0} & 31.0 & 40.0 & 10.0 & 6.0 & 6.0 & 13.0 & 7.0 & 5.0 & 31.0 & 27.0 & 20.0 & \textbf{82.0} \\
\midrule
\multicolumn{18}{@{}l}{\textbf{(d) Vague prompt, mixed-hop training}}\\
\midrule
L1 (re-ID) & 99 & 46.5 & 21.2 & 27.3 & 73.7 & 45.5 & 55.6 & 24.2 & 9.1 & 10.1 & 24.2 & 13.1 & 15.2 & \textbf{87.9} & 15.2 & 14.1 & \textbf{97.0} \\
L2 (spatial) & 98 & 56.1 & 30.6 & 40.8 & 76.5 & 59.2 & 64.3 & 17.3 & 9.2 & 10.2 & 25.5 & 13.3 & 13.3 & \textbf{93.9} & 32.7 & 20.4 & \textbf{100.0} \\
L3 (color) & 96 & 55.2 & 28.1 & 35.4 & 65.6 & 47.9 & 55.2 & 11.5 & 7.3 & 7.3 & 22.9 & 10.4 & 12.5 & \textbf{92.7} & 19.8 & 15.6 & \textbf{96.9} \\
L4 (dead ends) & 300 & 47.0 & 19.3 & 26.3 & 66.3 & 41.3 & 47.7 & 13.3 & 6.7 & 7.7 & 18.0 & 7.7 & 9.3 & \textbf{89.7} & 21.0 & 21.3 & \textbf{97.0} \\
L5 (twins) & 100 & 34.0 & 14.0 & 19.0 & 58.0 & 31.0 & 40.0 & 10.0 & 6.0 & 6.0 & 13.0 & 7.0 & 5.0 & \textbf{84.0} & 27.0 & 20.0 & \textbf{95.0} \\
\bottomrule
\end{tabular}%
}
\end{table}

\clearpage

\begin{table}[H]
\centering
\caption{\textbf{Two-hop success rate (\%).}
Same corpora, checkpoints, and scoring conventions as Table~\ref{tab:main}.
L4 uses test worlds $w_1$--$w_3$; other levels use development world $w_0$.
Caption$\to$Qwen3 retains the query image as pixels. In block~(b), the
agent uses the learned per-step retriever; the other trained-agent blocks
use the chain tool. Vague prompts are evaluated on L1--L5.
Bold marks the best result within each training block on each side of the
dashed rule.}
\label{tab:main-h2}
\scriptsize
\setlength{\tabcolsep}{1.1pt}
\renewcommand{\arraystretch}{1.1}
\resizebox{\textwidth}{!}{%
\begin{tabular}{@{}lr *{12}{c} c !{\dvrule} cc c@{}}
\toprule
& & \multicolumn{13}{c}{Retrieval methods} & \multicolumn{3}{!{\dvrule}c}{Agentic search} \\
\cmidrule(lr){3-15}\cmidrule(lr){16-18}
& & \multicolumn{6}{c}{Embedding retrieval} & \multicolumn{6}{c}{Caption $\rightarrow$ embedding}
& \router & GE2 & Qwen3 & $+$\,\router \\
\cmidrule(lr){3-8}\cmidrule(lr){9-14}
& & \multicolumn{3}{c}{GE2} & \multicolumn{3}{c}{Qwen3} & \multicolumn{3}{c}{GE2} & \multicolumn{3}{c}{Qwen3}
& & & & \\
\cmidrule(lr){3-5}\cmidrule(lr){6-8}\cmidrule(lr){9-11}\cmidrule(lr){12-14}
Level & $n$ & 1-shot & Greedy & Hist. & 1-shot & Greedy & Hist.
& 1-shot & Greedy & Hist. & 1-shot & Greedy & Hist. & & & & \\
\midrule
\multicolumn{18}{@{}l}{\textbf{(a) Precise prompt, $3$-hop training}}\\
\midrule
L1 (re-ID) & 99 & 2.0 & 13.1 & 12.1 & 1.0 & 21.2 & 12.1 & 0.0 & 2.0 & 3.0 & 0.0 & 6.1 & 5.1 & \textbf{81.8} & 90.9 & 83.8 & \textbf{96.0} \\
L2 (spatial) & 98 & 2.0 & 14.3 & 9.2 & 4.1 & 20.4 & 12.2 & 0.0 & 1.0 & 3.1 & 4.1 & 5.1 & 9.2 & \textbf{44.9} & \textbf{91.8} & 88.8 & 72.4 \\
L3 (color) & 97 & 4.1 & 13.4 & 10.3 & 2.1 & 14.4 & 5.2 & 0.0 & 0.0 & 1.0 & 1.0 & 3.1 & 4.1 & \textbf{46.4} & 84.5 & \textbf{85.6} & 79.4 \\
L4 (dead ends) & 300 & 2.7 & 8.3 & 8.0 & 1.0 & 12.3 & 6.0 & 0.0 & 1.0 & 0.3 & 1.0 & 4.3 & 4.3 & \textbf{45.0} & 65.3 & 66.7 & \textbf{75.0} \\
L5 (twins) & 100 & 2.0 & 8.0 & 6.0 & 2.0 & 6.0 & 4.0 & 0.0 & 0.0 & 0.0 & 0.0 & 3.0 & 4.0 & \textbf{38.0} & 50.0 & 64.0 & \textbf{70.0} \\
L6 (logo) & 94 & 3.2 & 3.2 & 2.1 & 4.3 & 7.4 & 4.3 & 1.1 & 0.0 & 0.0 & 2.1 & 2.1 & 1.1 & \textbf{9.6} & 23.4 & 25.5 & \textbf{28.7} \\
L7 (text) & 95 & 5.3 & \textbf{6.3} & 4.2 & 1.1 & 2.1 & 0.0 & 1.1 & 0.0 & 0.0 & 1.1 & 1.1 & 2.1 & 2.1 & 36.8 & \textbf{43.2} & 29.5 \\
\midrule
\multicolumn{18}{@{}l}{\textbf{(b) Precise prompt, mixed-hop training}}\\
\midrule
L1 (re-ID) & 99 & 2.0 & 13.1 & 12.1 & 1.0 & 21.2 & 12.1 & 0.0 & 2.0 & 3.0 & 0.0 & 6.1 & 5.1 & \textbf{89.9} & \textbf{90.9} & 83.8 & 89.9 \\
L2 (spatial) & 98 & 2.0 & 14.3 & 9.2 & 4.1 & 20.4 & 12.2 & 0.0 & 1.0 & 3.1 & 4.1 & 5.1 & 9.2 & \textbf{92.9} & \textbf{91.8} & 88.8 & 90.8 \\
L3 (color) & 97 & 4.1 & 13.4 & 10.3 & 2.1 & 14.4 & 5.2 & 0.0 & 0.0 & 1.0 & 1.0 & 3.1 & 4.1 & \textbf{94.8} & 84.5 & 85.6 & \textbf{92.8} \\
L4 (dead ends) & 300 & 2.7 & 8.3 & 8.0 & 1.0 & 12.3 & 6.0 & 0.0 & 1.0 & 0.3 & 1.0 & 4.3 & 4.3 & \textbf{90.0} & 65.3 & 66.7 & \textbf{75.3} \\
L5 (twins) & 100 & 2.0 & 8.0 & 6.0 & 2.0 & 6.0 & 4.0 & 0.0 & 0.0 & 0.0 & 0.0 & 3.0 & 4.0 & \textbf{87.0} & 50.0 & 64.0 & \textbf{77.0} \\
L6 (logo) & 94 & 3.2 & 3.2 & 2.1 & 4.3 & 7.4 & 4.3 & 1.1 & 0.0 & 0.0 & 2.1 & 2.1 & 1.1 & \textbf{23.4} & 23.4 & 25.5 & \textbf{26.6} \\
L7 (text) & 95 & 5.3 & \textbf{6.3} & 4.2 & 1.1 & 2.1 & 0.0 & 1.1 & 0.0 & 0.0 & 1.1 & 1.1 & 2.1 & \textbf{6.3} & 36.8 & 43.2 & \textbf{46.3} \\
\midrule
\multicolumn{18}{@{}l}{\textbf{(c) Vague prompt, $3$-hop training}}\\
\midrule
L1 (re-ID) & 99 & 1.0 & 8.1 & 6.1 & 2.0 & 11.1 & 3.0 & 0.0 & 2.0 & 1.0 & 0.0 & 3.0 & 1.0 & \textbf{37.4} & 23.2 & 28.3 & \textbf{79.8} \\
L2 (spatial) & 49 & 0.0 & 4.1 & 4.1 & 0.0 & 14.3 & 0.0 & 2.0 & 8.2 & 4.1 & 0.0 & 4.1 & 2.0 & \textbf{40.8} & 24.5 & 30.6 & \textbf{81.6} \\
L3 (color) & 48 & 0.0 & 6.2 & 8.3 & 0.0 & 14.6 & 6.2 & 0.0 & 0.0 & 0.0 & 0.0 & 0.0 & 0.0 & \textbf{25.0} & 6.2 & 6.2 & \textbf{72.9} \\
L4 (dead ends) & 150 & 0.0 & 2.7 & 1.3 & 0.7 & 7.3 & 4.0 & 0.0 & 0.0 & 0.0 & 0.0 & 1.3 & 0.0 & \textbf{20.0} & 7.3 & 6.7 & \textbf{59.3} \\
L5 (twins) & 50 & 0.0 & 4.0 & 2.0 & 0.0 & 10.0 & 2.0 & 0.0 & 0.0 & 0.0 & 0.0 & 0.0 & 0.0 & \textbf{22.0} & 2.0 & 10.0 & \textbf{58.0} \\
\midrule
\multicolumn{18}{@{}l}{\textbf{(d) Vague prompt, mixed-hop training}}\\
\midrule
L1 (re-ID) & 99 & 1.0 & 8.1 & 6.1 & 2.0 & 11.1 & 3.0 & 0.0 & 2.0 & 1.0 & 0.0 & 3.0 & 1.0 & \textbf{55.6} & 23.2 & 28.3 & \textbf{80.8} \\
L2 (spatial) & 49 & 0.0 & 4.1 & 4.1 & 0.0 & 14.3 & 0.0 & 2.0 & 8.2 & 4.1 & 0.0 & 4.1 & 2.0 & \textbf{59.2} & 24.5 & 30.6 & \textbf{93.9} \\
L3 (color) & 48 & 0.0 & 6.2 & 8.3 & 0.0 & 14.6 & 6.2 & 0.0 & 0.0 & 0.0 & 0.0 & 0.0 & 0.0 & \textbf{54.2} & 6.2 & 6.2 & \textbf{83.3} \\
L4 (dead ends) & 150 & 0.0 & 2.7 & 1.3 & 0.7 & 7.3 & 4.0 & 0.0 & 0.0 & 0.0 & 0.0 & 1.3 & 0.0 & \textbf{53.3} & 7.3 & 6.7 & \textbf{84.7} \\
L5 (twins) & 50 & 0.0 & 4.0 & 2.0 & 0.0 & 10.0 & 2.0 & 0.0 & 0.0 & 0.0 & 0.0 & 0.0 & 0.0 & \textbf{48.0} & 2.0 & 10.0 & \textbf{78.0} \\
\bottomrule
\end{tabular}%
}
\end{table}

\subsection{Agent Model and Retrieval Tool}
\label{app:model_tool}

\paragraph{Settings for Figure~\ref{fig:vhop_results}.}
Success rates use the same $300$ precise three-hop L4 test queries.
Panel (a) compares Qwen3 and \router{}, each standalone and as the
Flash agent's retrieval tool. \router{} uses precise mixed-hop training
and checkpoint \texttt{mixh\_rl\_best}. Colors identify retrieval tools;
circles denote standalone retrieval, and diamonds denote Flash agents.
The green dashed line connects the Pareto frontier among the four
plotted configurations.

The horizontal axis uses a linear scale. It adds generated tokens, including
thinking tokens, to retrieval hops, counting each hop as one token. For
agents, we sum tokens over all model calls and use the length of the
recorded retrieval trajectory as the hop count. These means come from a
separate pilot on $10$ matched queries in $w_1$. Standalone Qwen3 uses
five retrieval steps. The standalone Router count is estimated as $4+2b$,
where $b$ is the mean number of backtracks, giving $10.11$ for the
mixed-hop policy. The corresponding means for Flash with Qwen3 and
\router{} are $1{,}885.8$ and $727.9$. This is a token-count convention,
not a measure of latency or total computation.

Untrained retrievers count an answer found anywhere in the returned
images; \router{} allows $16$ policy actions and requires a correct answer
at \textsc{Stop}. Agents are scored on their final answers. Agents with
GE2 or Qwen3 allow $16$ generation rounds; agents with \router{} allow $8$.
The generation limit is $8{,}000$ tokens per model call.
Panel (b) uses the same mixed-hop Router and Flash + Qwen3 baseline:
upgrading Flash to Pro adds $3.7$ percentage points, while replacing
Qwen3 with \router{} adds $52.7$
(Tables~\ref{tab:main} and~\ref{tab:vlm16}).

Figure~\ref{fig:vlm-full} extends Figure~\ref{fig:vhop_results}b to all seven levels and both tool training settings.
Table~\ref{tab:vlm16} lists all model--tool combinations.
On L4, Flash + Qwen3 achieves $32.7\%$ success. Upgrading Flash to Pro raises
this to $36.3\%$; replacing Qwen3 with the precise three-hop or mixed-hop
\router{} raises it to $59.7\%$ or $85.4\%$, respectively. The gains are
$3.7$, $27.0$, and $52.7$ percentage points, computed before rounding.
With the mixed-hop tool, Flash and Pro reach $85.4\%$ and $84.7\%$.

\begin{table}[H]
\centering
\caption{\textbf{Agent model and retrieval tool.}
Three-hop answer accuracy (\%) with Gemini~3.5~Flash or Gemini~3.1~Pro,
up to $16$ tool calls, and the same answer-scoring convention.
Both Router tools use precise training prompts.
L4 uses the $300$-query test set; other levels use development world $w_0$.
The mean averages the seven levels. Bold marks the highest value in each
column.}
\label{tab:vlm16}
\small
\setlength{\tabcolsep}{3.6pt}
\renewcommand{\arraystretch}{1.05}
\begin{tabular}{@{}llrrrrrrr r@{}}
\toprule
Retrieval tool & Agent & L1 & L2 & L3 & L4 & L5 & L6 & L7 & mean \\
\midrule
\multirow{2}{*}{GE2 (untrained)} & Flash & 60.6 & 58.2 & 56.7 & 19.3 & 15.0 & 6.4 & 12.6 & 32.7 \\
 & Pro & \textbf{93.9} & \textbf{89.8} & \textbf{90.7} & 37.0 & 40.0 & 11.7 & 11.6 & 53.5 \\
\midrule
\multirow{2}{*}{Qwen3 (untrained)} & Flash & 73.7 & 69.4 & 77.3 & 32.7 & 28.0 & 9.6 & 14.7 & 43.6 \\
 & Pro & 78.8 & 77.6 & \textbf{90.7} & 36.3 & 34.0 & 9.6 & \textbf{18.9} & 49.4 \\
\midrule
\multirow{2}{*}{\router, $3$-hop} & Flash & 85.9 & 62.2 & 58.8 & 59.7 & 57.0 & 12.8 & 6.3 & 49.0 \\
 & Pro & 82.8 & 64.3 & 72.2 & 64.7 & 61.0 & \textbf{16.0} & 6.3 & 52.5 \\
\midrule
\multirow{2}{*}{\router, mixed-hop} & Flash & 80.8 & 80.6 & 89.7 & \textbf{85.4} & \textbf{84.0} & 10.6 & 5.3 & \textbf{62.3} \\
 & Pro & 78.8 & 81.6 & 88.7 & 84.7 & 83.0 & 11.7 & 4.2 & 61.8 \\
\bottomrule
\end{tabular}
\end{table}

\begin{figure}[htbp]
\centering
\includegraphics[width=\linewidth]{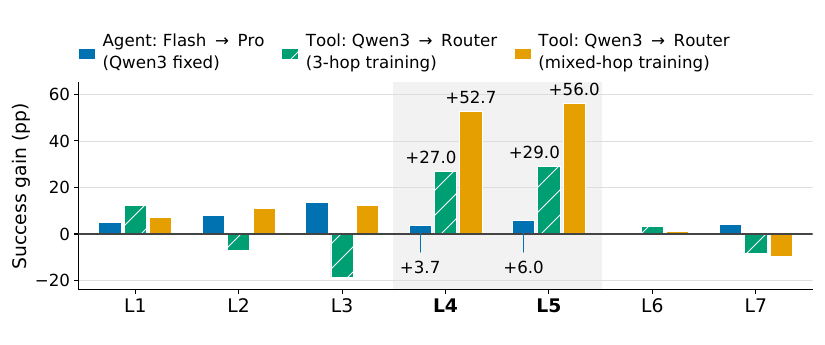}
\caption{\textbf{Agent and tool upgrades across all seven levels.}
Full version of Figure~\ref{fig:vhop_results}b, showing gains over Flash + Qwen3 on
precise three-hop queries, including both Router training configurations.
Router labels indicate training hops.
L4 uses the $300$-query test set; other levels use development world $w_0$.
Gains are computed before rounding.}
\label{fig:vlm-full}
\end{figure}

\subsection{Number of Images Returned per Call}

Table~\ref{tab:topk-full} reports accuracy and context usage for the
retrieval-width experiment. Larger candidate sets improve the untrained
tools, with GE2 reaching $81.3\%$ on L4 at $k=50$. The precise mixed-hop
\router{} reaches $85.4\%$ while returning fewer images. The number of
image occurrences sent to the model includes repeated conversation
history; it is distinct from the number of retrieved images and from
measured latency or monetary cost.

\begin{table}[H]
\centering
\caption{\textbf{Full retrieval-width comparison.}
Three-hop answer accuracy (\%) with Flash and up to $16$ tool calls.
L4 uses test worlds $w_1$--$w_3$; other levels use development world $w_0$.
A chain call returns the policy's active image chain. L4 context statistics
are per-query means: \emph{calls} counts tool calls, \emph{images} counts
returned images including repeats, and \emph{sent} also counts repeated
images in the conversation history. At $k=50$, three GE2 and four Qwen3
conversations reached the request-size limit and used the last top-1
candidate as the fallback answer. Dashes indicate settings not evaluated.}
\label{tab:topk-full}
\small
\setlength{\tabcolsep}{3.1pt}
\renewcommand{\arraystretch}{1.05}
\begin{tabular}{@{}lrrrrrrrr rrr@{}}
\toprule
Retrieval tool & $k$ & L1 & L2 & L3 & L4 & L5 & L6 & L7 & calls & images & sent \\
\midrule
\multirow{5}{*}{GE2 (untrained)} & 1 & 60.6 & 58.2 & 56.7 & 19.3 & 15.0 & 6.4 & 12.6 & 14.5 & 15 & 120 \\
 & 3 & 82.8 & 81.6 & 78.4 & 46.7 & 40.0 & 10.6 & 24.2 & 11.8 & 35 & 249 \\
 & 5 & 86.9 & 86.7 & 84.5 & 61.0 & 60.0 & 19.1 & 36.8 & 9.6 & 48 & 300 \\
 & 10 & 92.9 & 93.9 & 88.7 & 78.0 & 72.0 & 31.9 & 42.1 & 6.9 & 69 & 344 \\
 & 50 & -- & -- & -- & 81.3 & -- & -- & -- & 5.4 & 270 & 1084 \\
\midrule
\multirow{5}{*}{Qwen3 (untrained)} & 1 & 73.7 & 69.4 & 77.3 & 32.7 & 28.0 & 9.6 & 14.7 & 13.0 & 13 & 104 \\
 & 3 & 66.7 & 64.3 & 71.1 & 36.7 & 26.0 & 7.4 & 22.1 & 11.9 & 36 & 255 \\
 & 5 & 75.8 & 78.6 & 82.5 & 58.0 & 54.0 & 12.8 & 32.6 & 9.3 & 47 & 295 \\
 & 10 & 88.9 & 85.7 & 94.8 & 69.3 & 65.0 & 25.5 & 43.2 & 7.3 & 73 & 392 \\
 & 50 & -- & -- & -- & 78.7 & -- & -- & -- & 5.5 & 275 & 1163 \\
\midrule
Flash $+$ \router\ ($3$-hop) & chain & 85.9 & 62.2 & 58.8 & 59.7 & 57.0 & 12.8 & 6.3 & 2.6 & 14 & 42 \\
Flash $+$ \router\ (mixed-hop) & chain & 80.8 & 80.6 & 89.7 & 85.4 & 84.0 & 10.6 & 5.3 & 2.1 & 12 & 33 \\
\bottomrule
\end{tabular}
\end{table}

\clearpage

\subsection{Decision Penalty}
\label{sec:appendix:lenpen}

Figure~\ref{fig:lenpen_full} adds training reward and stopping behavior to
the rollout length and test accuracy shown in the main text.
Table~\ref{tab:lenpen} gives the checkpoint scores and prompt-transfer
results. At iteration $200$, the $\lambda=0$ and $\lambda=0.02$ runs reach
$48.0\%$ and $47.7\%$ on L4. On the vague prompts, the mean accuracy rises
from $26.0\%$ to $32.3\%$, although L4 decreases from $20.0\%$ to $18.0\%$.
The mean stop rate increases and the mean number of backtracks decreases.
Each setting has one run, so these results do not establish statistical
equivalence or seed-level reliability. Appendix~\ref{app:rl} records the
optimization settings, including the gradient-memory adjustments.

\begin{figure}[ht]
    \centering
    \includegraphics[width=\textwidth]{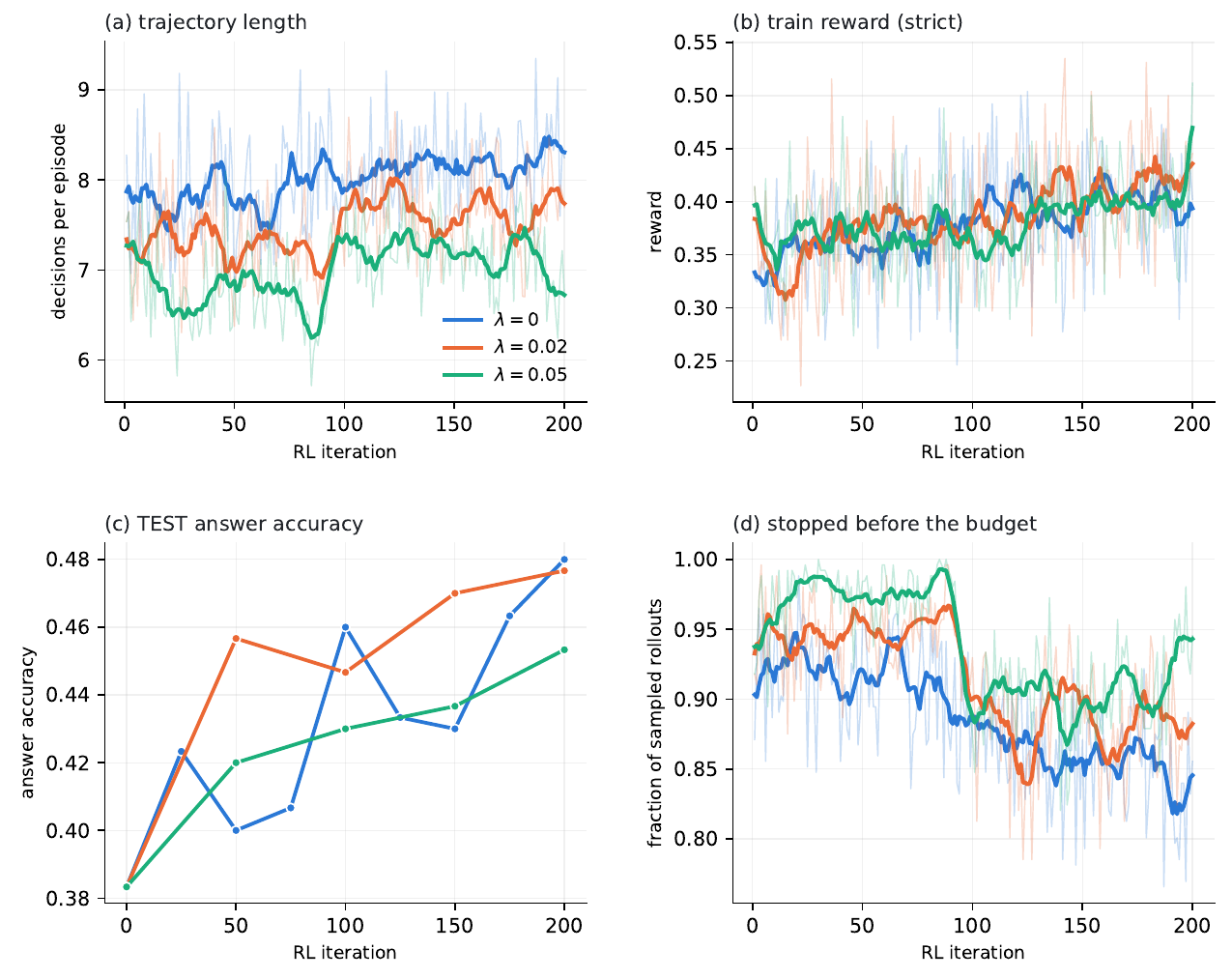}
    \caption{\textbf{Full training dynamics with a decision penalty.}
    Runs use \(\lambda\in\{0,0.02,0.05\}\) with the same
    \(16\)-decision budget.
    \textbf{(a)} Mean decisions per rollout.
    \textbf{(b)} Unpenalized terminal reward \(r\), defined identically across runs.
    \textbf{(c)} Success rate on the preregistered \(300\)-question L4 test set.
    \textbf{(d)} Fraction of sampled rollouts that issue \textsc{Stop} before
    exhausting the budget.
    Thin curves show per-iteration values and thick curves show moving
    averages in (a), (b), and (d).}
    \label{fig:lenpen_full}
\end{figure}
\begin{table}[H]
\centering
\caption{\textbf{Decision-penalty results.}
Comparison of $\lambda=0$ and $\lambda=0.02$ with gated decoding.
\emph{In-domain} rows use the L4 test set. \emph{Vague} rows measure
stopping on a verified acceptable answer, using development world $w_0$
except for L4, which uses the test worlds. \emph{Rephrased} rows use two
natural-language versions of the L4 development questions; they are
separate from the human-authored dataset. Stop rate and backtracks are
means over the five vague-prompt levels. Accuracy is in percent; stop rate
is a fraction. Bold marks the better value. Each setting uses one seed.}
\label{tab:lenpen}
\small
\setlength{\tabcolsep}{4pt}
\renewcommand{\arraystretch}{1.05}
\begin{tabular}{@{}llrrr@{}}
\toprule
& & $\lambda=0$ & $\lambda=0.02$ & $\Delta$ \\
\midrule
\multirow{3}{*}{\rotatebox[origin=c]{90}{\scriptsize in-dom.}} & L4 test, iter $50$ & 41.0 & \textbf{45.7} & $+4.7$ \\
 & L4 test, iter $100$ & \textbf{46.0} & 44.7 & $-1.3$ \\
 & L4 test, iter $200$ & \textbf{48.0} & 47.7 & $-0.3$ \\
\midrule
\multirow{6}{*}{\rotatebox[origin=c]{90}{\scriptsize vague}} & L1 & 32.3 & \textbf{37.4} & $+5.1$ \\
 & L2 & 34.7 & \textbf{49.0} & $+14.3$ \\
 & L3 & 27.1 & \textbf{31.2} & $+4.1$ \\
 & L4 & \textbf{20.0} & 18.0 & $-2.0$ \\
 & L5 & 16.0 & \textbf{26.0} & $+10.0$ \\
 & mean & 26.0 & \textbf{32.3} & $+6.3$ \\
\midrule
\multirow{2}{*}{\rotatebox[origin=c]{90}{\scriptsize repr.}} & rephrased v1 & 19.0 & 19.0 & $\pm0.0$ \\
 & rephrased v2 & 26.0 & \textbf{31.0} & $+5.0$ \\
\midrule
\multirow{2}{*}{\rotatebox[origin=c]{90}{\scriptsize beh.}} & stop rate & 0.43 & \textbf{0.51} & $+0.08$ \\
 & backtracks & 2.69 & \textbf{2.22} & $-0.47$ \\
\bottomrule
\end{tabular}
\end{table}

\section{Generalization and Limitations}
\label{sec:ana-robust}

\subsection{Three-Object Scenes}

The three-object evaluation adds a distractor to each two-object hop scene
while preserving the question and target. Query views and answer portraits
are unchanged. The added objects come from a separate pool. The standalone
\router{} rows in Figure~\ref{fig:three_objects} use the policy alone;
Flash appears only in the rows explicitly labeled as agents. No agent with
the trained chain tool is included in that comparison.

The precise three-hop and mixed-hop policies decrease from $48.0\%$ to
$30.4\%$ and from $76.3\%$ to $52.2\%$, respectively. The perturbed
results average the three test worlds, with $99$, $100$, and $100$ valid
queries. The drop shows sensitivity to competing visual content, but this
experiment alone does not separate failures of object recognition from
failures of search control.

\subsection{Four-Hop Extrapolation}
\label{sec:ana-four-hop}

The four-hop evaluation extends L4 beyond the maximum training depth.
It uses $97$ questions over a $600$-image development corpus. All policies
receive precise prompts. Table~\ref{tab:plus-one-hop} reports the better of
two decoding settings for each checkpoint: a question-specific minimum
stop depth and stopping permitted after the first retrieval. Thus the
comparison gives each policy the benefit of either stopping constraint.

\begin{table}[H]
\centering
\caption{\textbf{Four-hop answer accuracy (\%).} Standalone policies on
L4 four-hop development queries, with a $16$-decision budget. Values use
the better of the two stopping constraints described above.}
\label{tab:plus-one-hop}
\small
\setlength{\tabcolsep}{8pt}
\begin{tabular}{@{}lr@{}}
\toprule
Training configuration & Accuracy \\
\midrule
Precise, $3$-hop & 1.0 \\
Precise, mixed-hop & 1.0 \\
Vague, $3$-hop & 8.2 \\
Vague, mixed-hop & 1.0 \\
\bottomrule
\end{tabular}
\end{table}

All four policies remain at or below $8.2\%$. For the precise three-hop
policy, the depth floor gives $0.0\%$ and removing it gives $1.0\%$;
the precise mixed-hop policy obtains $1.0\%$ either way. The depth floor
therefore does not resolve the failure. This test also increases the corpus
from roughly $450$ to $600$ images and adds another layer of distractors.
It measures transfer to longer, harder searches, rather than isolating hop
count alone. Training on four-hop worlds is a possible next experiment;
recovery from such training has not been established here.

\subsection{Realistic Queries and Settings}
\label{app:human-authored}

Table~\ref{tab:natural-full} expands the results on realistic queries in
Table~\ref{tab:natural}. All methods use the same $50$ questions and
$386$-image corpus, without further training. Annotators write the queries
and specify everyday scenes; the images are re-rendered while preserving
the intended content. These are human-authored tasks with rendered images,
so the experiment measures transfer beyond procedural prompts and scene
templates, rather than performance on unmodified personal photographs.

\begin{table}[H]
    \centering
    \caption{\textbf{Full results on realistic queries.}
    Answer accuracy and set success (\%) for standalone \router{} and
    Flash with each retrieval tool. The metric definitions are given below.}
    \label{tab:natural-full}
    \small
    \setlength{\tabcolsep}{6pt}
    \renewcommand{\arraystretch}{1.1}
    \begin{tabular}{@{}llrr@{}}
        \toprule
        Method & Training configuration & Answer accuracy & Set success \\
        \midrule
        \multirow{4}{*}{\router}
            & Precise, \(3\)-hop  & 12.0 & 20.0 \\
            & Precise, mixed-hop & 4.0  & 20.0 \\
            & Vague, \(3\)-hop    & 4.0  & 4.0 \\
            & Vague, mixed-hop   & 14.0 & 14.0 \\
        \midrule
        \multirow{4}{*}{Flash \(+\) \router}
            & Precise, \(3\)-hop  & 20.0 & 36.0 \\
            & Precise, mixed-hop & 20.0 & 34.0 \\
            & Vague, \(3\)-hop    & 8.0  & 18.0 \\
            & Vague, mixed-hop   & 16.0 & 30.0 \\
        \midrule
        Flash \(+\) GE2   & Untrained & 12.0 & 22.0 \\
        Flash \(+\) Qwen3 & Untrained & 8.0  & 26.0 \\
        \bottomrule
    \end{tabular}
\end{table}

\paragraph{Metrics.}
For standalone \router{}, answer accuracy (\texttt{answer\_acc}) requires
\textsc{Stop} with an acceptable image at the top of the stack. Set success
(\texttt{e2e}) checks whether an acceptable image is anywhere in the final
active stack; popped images do not count. For Flash, answer accuracy
(\texttt{ans\_acc}) checks the final returned answer. If there is no valid
\texttt{FINAL} declaration, the evaluator uses the last answer candidate
returned by \router{}, or the last retrieved image for GE2 and Qwen3.
Set success (\texttt{final\_gt}) checks the cumulative returned image set
across tool calls. These set metrics therefore retain different histories
for standalone policies and agents.

\paragraph{Evaluation records and budgets.}
The precise three-hop and mixed-hop rows use
\texttt{rloo10w10\_200} and \texttt{mixh\_rl\_best}.
The archived human-authored runs label \texttt{vgo\_rl\_best} as vague
three-hop and \texttt{vgmix\_rl\_best} as vague mixed-hop. These are the
checkpoints underlying the vague rows here; they differ from the
\texttt{vg\_rl\_best}/\texttt{vgo\_rl\_best} pair used for the vague
blocks in the procedural main table.
The recorded Flash--\router{} runs allow up to eight chain calls, each
with at most $16$ policy decisions. GE2 and Qwen3 runs allow up to $16$
single-step tool calls. Each chain call may return multiple images, so a
chain call and a single-step call have different costs.

The main table reports the agent's set success: $36.0\%$ and $34.0\%$
for the two precise \router{} tools, compared with $22.0\%$ and $26.0\%$
for GE2 and Qwen3. Their answer accuracies are $20.0\%$, $20.0\%$,
$12.0\%$, and $8.0\%$. The gap between finding an answer and returning it
shows that answer selection remains a separate source of error. With only
$50$ queries, these results support a limited transfer claim, not reliable
performance across arbitrary photo collections.

\subsection{Extensions to Logo and Text Matching}
\label{sec:ana-limits}

The matching rule can extend beyond physical object identity. L6 adds
logo matching, and L7 adds printed-text matching. The rule is chosen
independently for each hop and specified in the instruction; the actual
logo or text must be read from the image. These extensions retain the
constraints and distractors from L1--L5. Rendering checks also cover logos
and printed text, and training and evaluation use disjoint logo and
printed-string pools.

The L6 evaluation world contains $94$ queries, $501$ corpus images, and
$50$ query images; L7 contains $95$ queries, $481$ corpus images, and
$50$ query images. Table~\ref{tab:matching-extensions} reports the
three-hop results with precise prompts. Tables~\ref{tab:main-h1}
and~\ref{tab:main-h2} include the one- and two-hop results. The trained
policies use the same L4 checkpoints as the main evaluation, without
further training on either extension.

\begin{table}[H]
\centering
\caption{\textbf{Three-hop success on logo and text matching (\%).}
These rows extend Table~\ref{tab:main}, with the same methods and scoring
rules. Both levels use precise prompts and development world $w_0$.
The strategy or training configuration is listed in the second column.}
\label{tab:matching-extensions}
\small
\setlength{\tabcolsep}{8pt}
\renewcommand{\arraystretch}{1.05}
\begin{tabular}{@{}llrr@{}}
\toprule
Method & Strategy / training & L6 (logo) & L7 (text) \\
\midrule
GE2 & Single-shot & 2.1 & 3.2 \\
GE2 & Greedy & 4.3 & 2.1 \\
GE2 & History & 2.1 & 2.1 \\
Qwen3 & Single-shot & 3.2 & 4.2 \\
Qwen3 & Greedy & 4.3 & 3.2 \\
Qwen3 & History & 4.3 & 3.2 \\
\midrule
Caption $\to$ GE2 & Single-shot & 2.1 & 0.0 \\
Caption $\to$ GE2 & Greedy & 1.1 & 0.0 \\
Caption $\to$ GE2 & History & 0.0 & 0.0 \\
Caption $\to$ Qwen3 & Single-shot & 3.2 & 4.2 \\
Caption $\to$ Qwen3 & Greedy & 3.2 & 2.1 \\
Caption $\to$ Qwen3 & History & 2.1 & 2.1 \\
\midrule
\router{} & Precise, $3$-hop & 7.4 & 1.1 \\
\router{} & Precise, mixed-hop & 5.3 & 2.1 \\
\midrule
Flash $+$ GE2 & Untrained & 6.4 & 12.6 \\
Flash $+$ Qwen3 & Untrained & 9.6 & 14.7 \\
Flash $+$ \router{} & Precise, $3$-hop & 12.8 & 6.3 \\
Flash $+$ \router{} & Precise, mixed-hop & 10.6 & 5.3 \\
\bottomrule
\end{tabular}
\end{table}

Training uses L4 identity links, so these results test transfer to matching
rules absent from post-training. With precise mixed-hop training,
standalone \router{} achieves $5.3\%$ on L6 and $2.1\%$ on L7;
Flash with the same tool achieves $10.6\%$ and $5.3\%$, respectively.
The results do not establish whether the main cause is reading the marks,
applying the new matching rule, or controlling the resulting search.
Wider retrieval improves some L6/L7 agent results in
Table~\ref{tab:topk-full}, but it does not resolve these questions.
Training with logo and text links, together with separate perception
checks, would be needed to distinguish these possibilities.

\clearpage
\section{Trajectory Case Studies}
\label{app:cases}

These examples show the retrieval decisions behind the results. In the
figures, \textsc{Push} denotes \textsc{Select} and \textsc{Pop} denotes
\textsc{Backtrack}; image identifiers refer to entries in the same world.

\subsection{Recovering from a Wrong Branch}

We compare checkpoints from precise three-hop training on L4 development
query \texttt{2\_leftof}. The solver must find object Q to the right of a
yellow object A, then A to the left of a green object B, and finally B alone.
The gold chain is \texttt{00076}$\to$\texttt{00079}$\to$\texttt{00352}.
Figure~\ref{fig:case_a_rl} contrasts Online~IL and RLVR after the same
initial mistake; the following figures show the other methods on this query.

\begin{figure}[H]
\centering
\includegraphics[width=\linewidth]{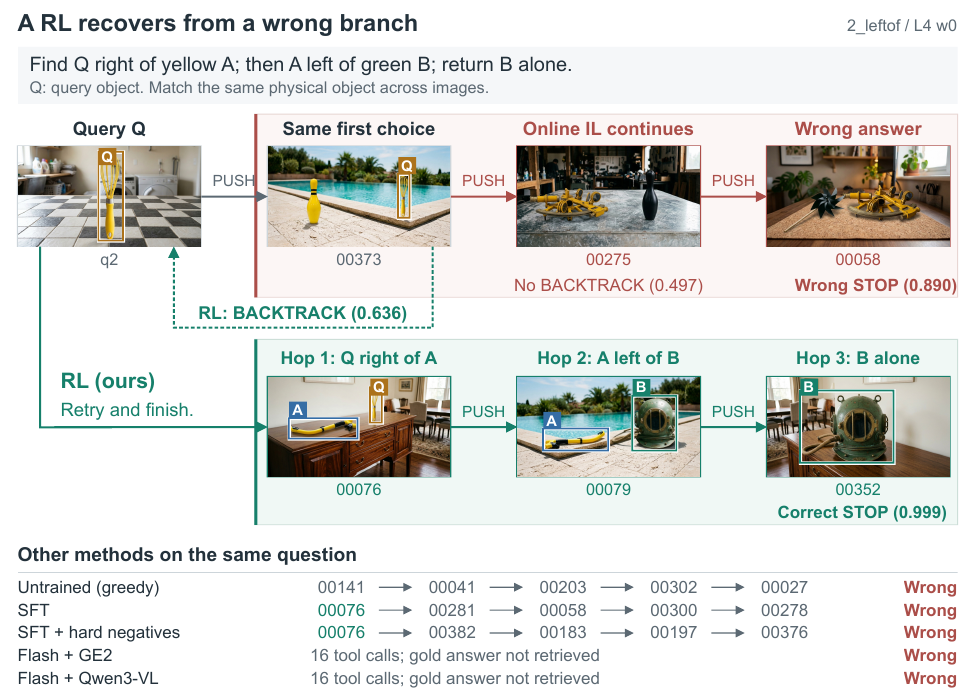}
\caption{\textbf{Recovery after the same incorrect first choice.}
Online~IL and RLVR both first select \texttt{00373}. Online~IL continues
along that branch and stops on a wrong answer. RLVR backtracks with
$p_{\mathrm{bt}}=0.636$, retrieves the three gold images, and stops with
$p_{\mathrm{stop}}=0.999$. Solid arrows show selections and the dashed
return shows backtracking. The lower rows summarize the other methods;
the untrained and SFT trajectories use five fixed retrieval steps.}
\label{fig:case_a_rl}
\end{figure}

\clearpage
\begin{figure}[H]
\centering
\includegraphics[width=\linewidth]{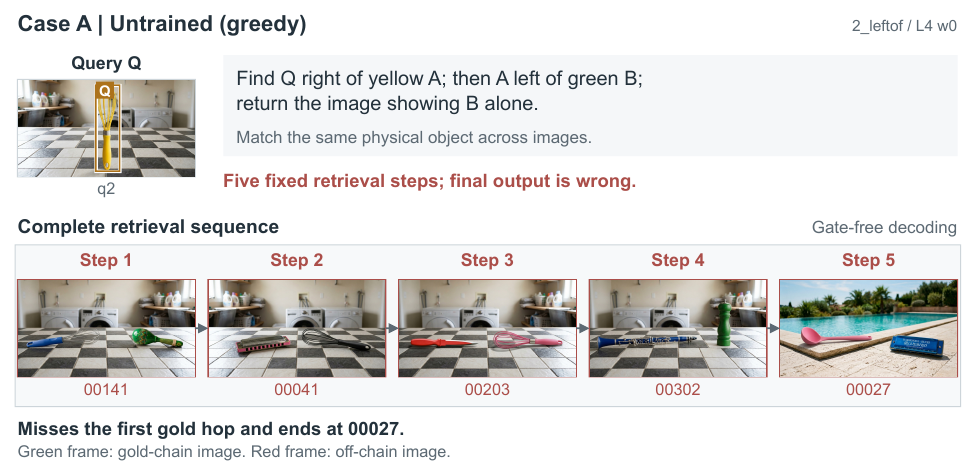}
\caption{\textbf{Untrained retrieval misses the first hop.}
The complete five-step greedy trajectory for \texttt{2\_leftof} ends at
\texttt{00027}. Every retrieved image lies outside the gold chain (red
frames). This decoder takes five steps without learned backtracking or
stopping.}
\label{fig:case_a_untrained}
\end{figure}

\begin{figure}[H]
\centering
\includegraphics[width=\linewidth]{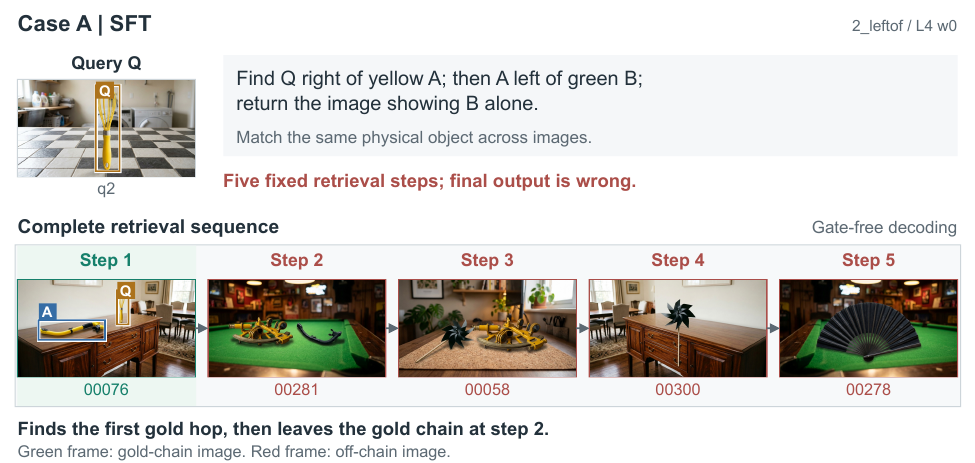}
\caption{\textbf{SFT retrieves the first hop, then leaves the chain.}
The state-encoder warm-up retrieves \texttt{00076} (green frame), then
selects \texttt{00281} and eventually ends at \texttt{00278}. The full
five-step trajectory is shown with the same decoder as the untrained model.}
\label{fig:case_a_sft}
\end{figure}

\clearpage
\begin{figure}[H]
\centering
\includegraphics[width=\linewidth,height=0.79\textheight,keepaspectratio]
    {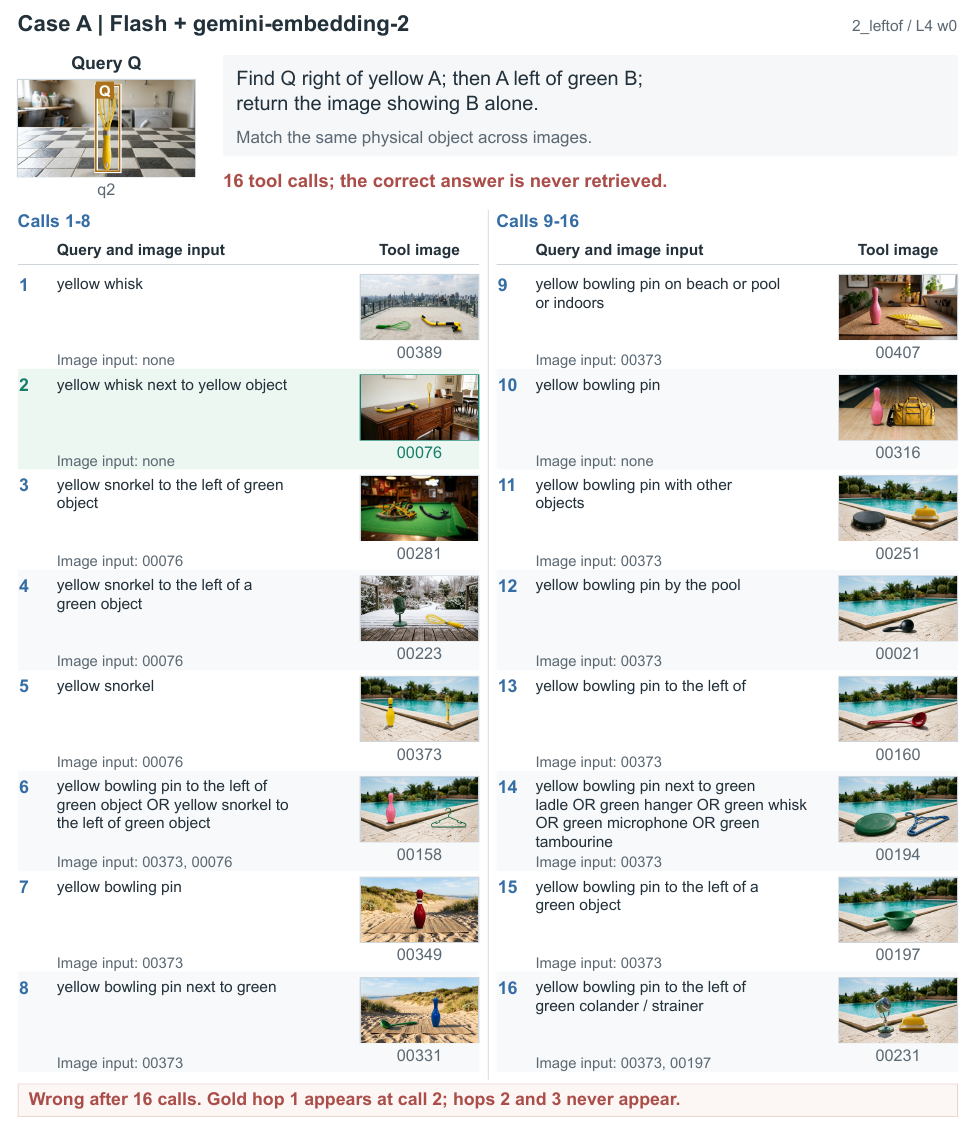}
\caption{\textbf{Flash with GE2 on the same query.}
All $16$ tool calls are shown, read down the left column and then down
the right. Each row gives the recorded query, image inputs, and retrieved
image. Call $2$ retrieves the first gold hop, but neither the second hop
nor the answer is retrieved. The recorded outcome is incorrect.}
\label{fig:case_a_agent_ge2}
\end{figure}

\clearpage
\begin{figure}[H]
\centering
\includegraphics[width=\linewidth,height=0.79\textheight,keepaspectratio]
    {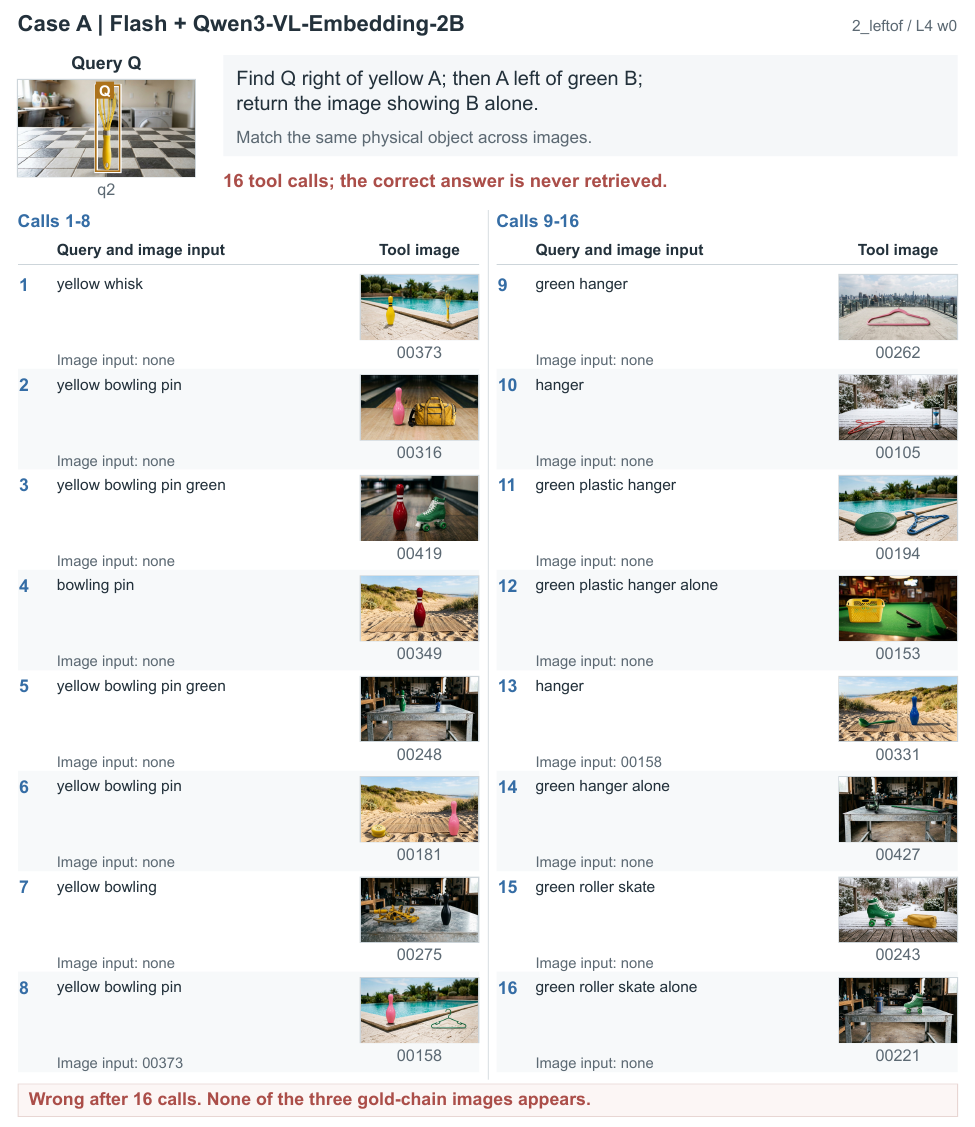}
\caption{\textbf{Flash with Qwen3 on the same query.}
The complete $16$-call trace uses the same layout as
Figure~\ref{fig:case_a_agent_ge2}. None of the three gold images is
retrieved. The last thumbnail shows the final tool result; the figure
does not supply a separate answer declaration.}
\label{fig:case_a_agent_qwen3}
\end{figure}

\clearpage
\subsection{Revising the Query and Selecting an Answer}

An agent can help at two different points. If a returned chain lacks the
answer, it can call the policy again with revised text. If the answer is
already present, it can select that image even when the policy continues
searching. Figures~\ref{fig:case_b_agent_requery} and~\ref{fig:answer-selection}
show these behaviors on two separate queries.

\begin{figure}[H]
\centering
\includegraphics[width=\linewidth]{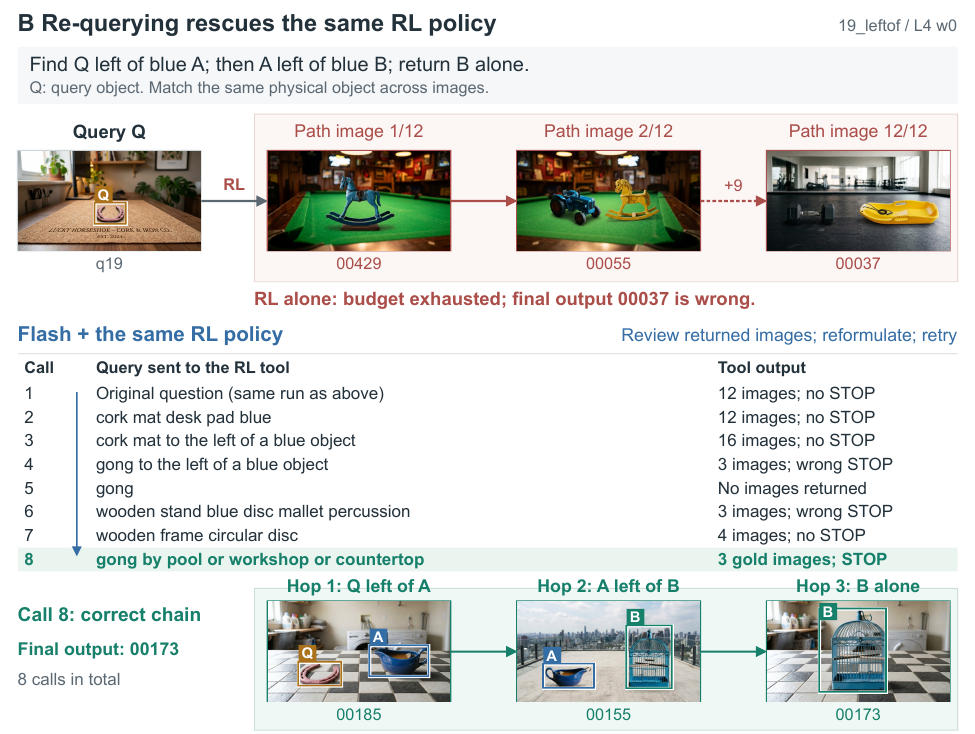}
\caption{\textbf{Re-querying the same policy recovers a valid chain.}
On L4 development query \texttt{19\_leftof}, the initial policy call
exhausts its budget and returns a chain without the answer. Flash retries
the same checkpoint and query image with revised text. The eighth call
retrieves \texttt{00185}$\to$\texttt{00155}$\to$\texttt{00173} and
stops correctly. Query strings are copied from the trace. The recorded
system answer is \texttt{00173}; the trace does not contain a separate
\texttt{FINAL} declaration.}
\label{fig:case_b_agent_requery}
\end{figure}

\begin{figure}[H]
\centering
\includegraphics[width=\linewidth]{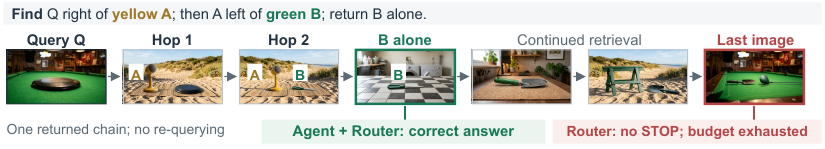}
\caption{\textbf{Answer selection from the same returned chain.}
On L4 development query \texttt{33\_leftof}, precise mixed-hop \router{}
retrieves the gold chain but continues searching until its $16$-action
budget runs out. Its returned stack contains six images and ends at the
incorrect image \texttt{00435}. Without re-querying, Flash selects the
correct third image, \texttt{00247}.}
\label{fig:answer-selection}
\label{fig:case-b}
\end{figure}

\clearpage
\subsection{A Vague Query with Two Acceptable Answers}

The vague question in Figure~\ref{fig:vague-branches} omits the relation
that distinguishes two precise chains. Both share the first hop, and both
endpoints are verified answers. The demonstration record specifies one
branch; the outcome reward accepts either endpoint. In this trace, RLVR
first explores the demonstrated branch, backtracks, and stops at the
other acceptable endpoint. Online~IL repeatedly backtracks and exhausts
its budget without returning an answer.

\begin{figure}[H]
\centering
\includegraphics[width=\linewidth,height=0.83\textheight,keepaspectratio]
    {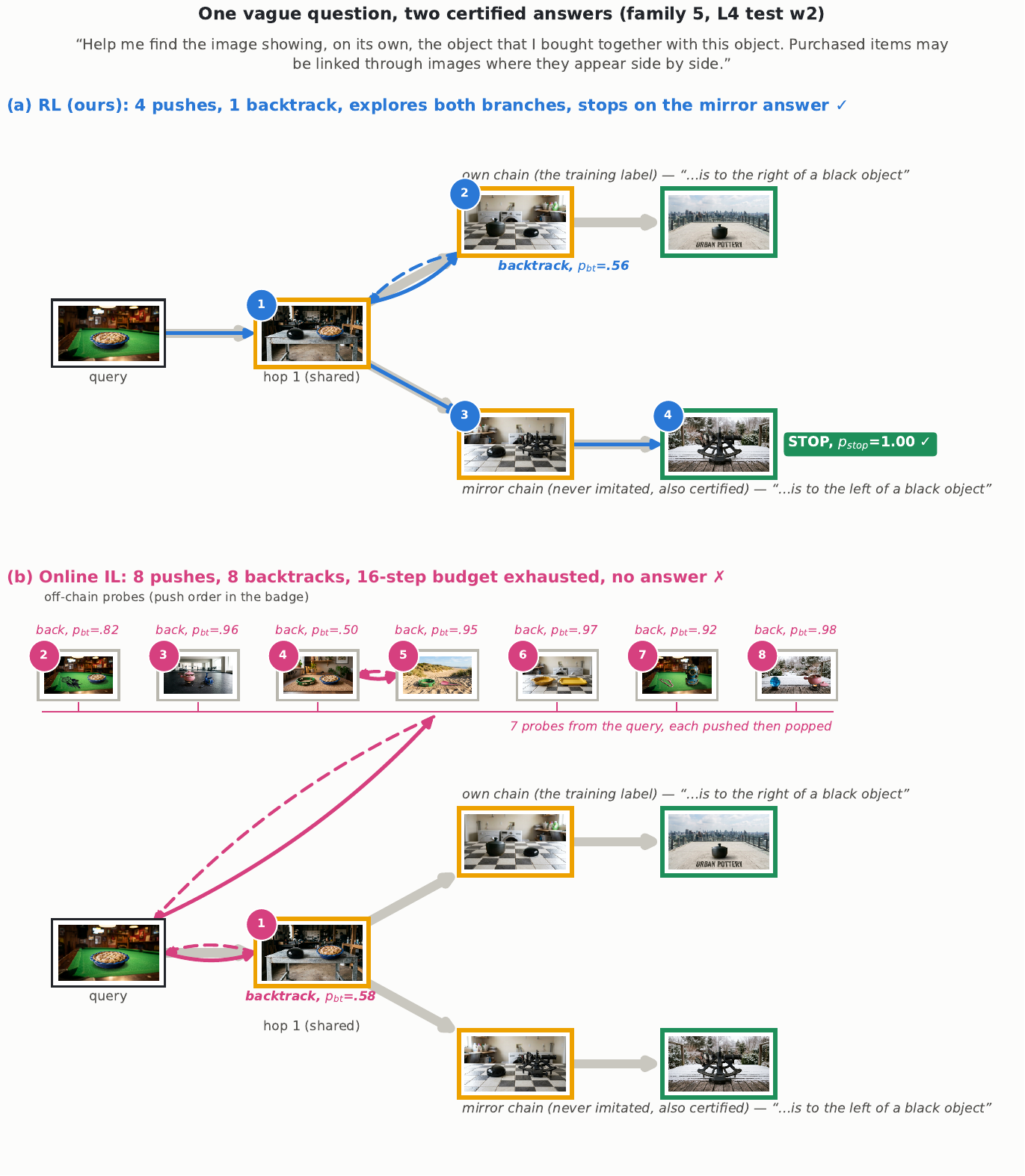}
\caption{\textbf{Two valid endpoints under one vague instruction.}
Family $5$ from L4 test world $w_2$. Gray paths show the two verified
chains; colored arrows show the recorded policy decisions. Solid arrows
are selections and dashed arrows are backtracks. RLVR makes four
selections, backtracks once, and stops at the alternative valid answer.
Online~IL makes eight selections and eight backtracks, exhausting its
$16$-decision budget. Both are standalone policies.}
\label{fig:vague-branches}
\end{figure}

\end{document}